\newtheorem{theorem}{Theorem}
\newtheorem{definition}{Definition}[]
\title{Superior Scoring Rules for Probabilistic Evaluation of Single-Label Multi-Class Classification Tasks}
\author{Rouhollah Ahmadian \\
  Department of Mathematics and Computer Science \\
  Amirkabir University of Technology (Tehran Polytechnic)\\
  Iran\\
  \texttt{rahmadian@aut.ac.ir}\\
   \And
  Mehdi Ghatee\footnote{The corresponding author} \\
  Department of Mathematics and Computer Science \\
  Amirkabir University of Technology (Tehran Polytechnic)\\
  Iran\\
  \texttt{ghatee@aut.ac.ir}\\
 \And
  Johan Wahlstr\"om \\
  Department of Computer Science\\
 University of Exeter\\
UK\\
  \texttt{j.wahlstrom@exeter.ac.uk}\\
}
\begin{document}
\maketitle
\begin{abstract}
This study introduces novel superior scoring rules called Penalized Brier Score (\textit{PBS}) and Penalized Logarithmic Loss (\textit{PLL}) to improve model evaluation for probabilistic classification. Traditional scoring rules like Brier Score and Logarithmic Loss sometimes assign better scores to misclassifications in comparison with correct classifications. This discrepancy from the actual preference for rewarding correct classifications can lead to suboptimal model selection. By integrating penalties for misclassifications, \textit{PBS} and \textit{PLL} modify traditional proper scoring rules to consistently assign better scores to correct predictions. Formal proofs demonstrate that \textit{PBS} and \textit{PLL} satisfy strictly proper scoring rule properties while also preferentially rewarding accurate classifications. Experiments showcase the benefits of using \textit{PBS} and \textit{PLL} for model selection, model checkpointing, and early stopping. \textit{PBS} exhibits a higher negative correlation with the F1 score compared to the Brier Score during training. Thus, \textit{PBS} more effectively identifies optimal checkpoints and early stopping points, leading to improved F1 scores. Comparative analysis verifies models selected by \textit{PBS} and \textit{PLL} achieve superior F1 scores. Therefore, \textit{PBS} and \textit{PLL} address the gap between uncertainty quantification and accuracy maximization by encapsulating both proper scoring principles and explicit preference for true classifications. The proposed metrics can enhance model evaluation and selection for reliable probabilistic classification.
\end{abstract}

\keywords{Strictly Proper Scoring Rules \and Evaluation Metric \and Probabilistic Evaluation \and Probabilistic Classification \and Model Selection}

\section{Introduction}
Evaluation metrics play a critical role in model selection, feature selection, parameter tuning, and regularization when evaluating the performance of a classification model \cite{hossin2015review}.
In model selection, evaluation metrics such as accuracy, precision, recall, and F-measures are used to compare the performance of different models and select the best model for a specific task \cite{wang2020deep}.
Similarly, in feature selection, evaluation metrics are used to identify the most informative features for a specific task. By comparing the performance of a classification model with different feature subsets, irrelevant or redundant features can be eliminated, improving the model's efficiency and effectiveness \cite{pudjihartono2022review}.
Model checkpointing, also known as snapshotting, involves saving the state of a model during the training process at regular intervals. By evaluating the performance of each snapshot on validation metrics, the best-performing snapshot can be selected for the final model. This helps prevent overfitting by choosing a snapshot that exhibits good generalization ability on the validation set, improving the model's ability to generalize to unseen data \cite{zhang2020snapshot,annavarapu2021deep,siddiqui2023snapshot,griewank2000algorithm}.
In regularization, evaluation metrics are used to balance the model's complexity and its ability to generalize to new data. By evaluating the performance of a classification model with different regularization strengths, overfitting can be prevented, ensuring that the model performs well on unseen data \cite{canbek2022ptopi,prechelt2002early}.

In classification tasks, the evaluation of the model often relies on simple accuracy measures and related metrics derived from the confusion matrix that only considers predictions specifying the truth class \cite{tharwat2020classification}.
However, this approach overlooks the crucial aspect of probabilistic uncertainty quantification \cite{guo2017calibration}.
Ensuring the accuracy of predictive uncertainty is critical for classification models used in safety-critical applications. To be considered reliable and calibrated in statistical terms, a classification model must exhibit an honest expression of its predictive distribution \cite{vaicenavicius2019evaluating}.
In other words, the model should not only make predictions but also convey the level of uncertainty associated with those predictions \cite{xu2022deep}.
For example, consider a binary classification problem where the goal is predicting a certain disease.
A classifier that outputs a single class prediction of \textit{disease} or \textit{no disease} may achieve a high accuracy rate in terms of correct predictions. However, it fails to capture and communicate the inherent uncertainty surrounding each prediction.
This discrepancy between the use of simple accuracy measures in classification and the need for probabilistic assessments has motivated researchers to explore the use of scoring rules \cite{gneiting2005calibrated}.

Scoring rules are used to evaluate probabilistic predictions or forecasts in decision theory \cite{seidenfeld2012forecasting}.
When assessing probabilistic forecasts, it is crucial to use a scoring method that accurately reflects the reported probabilities and their alignment with actual outcomes \cite{winkler2019probability}. One approach to achieving this is through the use of strictly proper scoring rules. These rules assign optimal scores only when the forecasts match the true probabilities \cite{gneiting2007strictly}. The Brier Score, initially introduced in meteorology, is an early example of such a rule \cite{brier1950verification}. It was developed to discourage biased reporting and promote honest assessments of uncertainty. As a variant of the quadratic scoring rule, it remains widely utilized \cite{winkler2019probability}. Another pioneering effort resulted in the logarithmic rule, which is notable for its theoretical connection to entropy \cite{good1952rational}. Over time, numerous studies have examined commonly used scoring rules and their desirable statistical properties from different perspectives. Some analyses have focused on aspects such as forecast calibration and consistency incentives, while others have explored connections to information theory concepts \cite{guo2017calibration,carvalho2016overview}. As probabilistic prediction continues to be a significant area of research, ongoing investigations into existing rule properties and potential novel approaches will contribute to further understanding and advancement of scoring methodologies.
In this regard, this paper proposes a novel attribute for strictly proper scoring rules.

\subsection{Motivation}
The classification decision is commonly partitioned into categories of true positive, true negative, false positive, and false negative.
Let us examine an illustrative scenario involving a true vector of $[0, 1, 0]$ and two predicted vectors: $A = [0.33, 0.34, 0.33]$ and $B = [0.51, 0.49, 0]$. Vector $A$ corresponds to a true negative prediction, while vector $B$ represents a false negative prediction. It is crucial to critically evaluate which vector demonstrates superior performance, taking into account that the $\arg \max$ of vector $A$ accurately predicts the outcome, while vector $B$ assigns a higher probability to the correct class. Upon analyzing the performance of vectors $A$ and $B$ in a single-label scenario, it becomes evident that vector $A$ possesses an advantage over vector $B$ in terms of classification accuracy. This attribute holds significant importance as accurate decision-making is highly desirable and essential in numerous applications. Consequently, considering the paramount significance of precise decisions, it can be concluded that vector $A$ surpasses vector $B$ in quality.
Therefore, this paper advocates that the scoring rule ought to consistently assign better scores to decisions about true positives and true negatives.
However, a scientific discrepancy arises when considering the Brier Score and Logarithmic Loss metrics, as they occasionally contradict this advocated principle. Surprisingly, vector $A$ exhibits higher Brier Score and Logarithmic Loss values compared to vector $B$ contrary to expectations. This inconsistency between the advocated principle and the observed outcomes highlights a scientific gap or inconsistency that this paper aims to address and resolve. Therefore, this paper proposes a novel attribute for strictly proper scoring rules to tackle this issue.
Our contributions can be stated as the following:
\begin{itemize}
\item It hypothesizes that evaluation metrics should favor accurate predictions (true positives and true negatives) over misclassified ones (false positives and false negatives) when evaluating multi-class single-label classifications.

\item We show traditional strictly proper scores like Brier Score and Logarithmic Loss can rate incorrect predictions higher than correct ones.

\item A new property termed \textit{superior} is defined for scoring rules to formally establish that evaluations should preference true predictions.

\item To this end, this research proposes two novel scoring rules called Penalized Brier Score (PBS) and Penalized Logarithmic Loss (PLL).

\item This research modifies The scoring rules Brier Score and Logarithmic Loss by incorporating a penalty term to make them \textit{superior}. This penalizes incorrect predictions more strongly than the original scoring rules, aligning with the initial hypothesis that favors correct decisions over wrongs.

\item The utility of the proposed evaluation metrics for early stopping and checkpointing is experimentally assessed.

\item Experimental results demonstrate that the proposed evaluation metrics enhance the efficiency of the classifier. Furthermore, these metrics exhibit a high correlation with the F-1 score while encompassing the measurement of prediction uncertainty and calibration.

\end{itemize}
The paper is structured in the following way: Section \ref{sec:2} provides background information and sets the stage for the research. Section \ref{sec:3} reviews previous work related to the topic. Section \ref{sec:4} details the motivation for developing new scoring rules and introduces the proposed Penalized Brier Score and Penalized Logarithmic Loss methods. Section \ref{sec:5} presents experimental results obtained from spatio-temporal data. Finally, section \ref{sec:6} concludes the paper.

\section{Preliminaries}
\label{sec:2}

\subsection{Scoring Rules}
A scoring rule $S(\cdot,\cdot)$ is a bivariate function that takes two arguments: a probability measure, representing a forecast, and an observed outcome \cite{bolin2023local}.
The function returns a real number that measures the deviation between a forecast probability and reality.
They provide a summary measure for the evaluation of predictions that assign probabilities to events, such as probabilistic classification of a set of mutually exclusive outcomes or classes \cite{landes2015probabilism}.
Scoring rules can be thought of as \textit{cost function} or \textit{loss function} and are evaluated as the empirical mean of a given sample, simply called score \cite{gneiting2007strictly}.
It is worth noting that in neural networks, loss functions are typically utilized during the backpropagation process to optimize the network. However, in this research, we are focusing on evaluation metrics for assessing model performance.

Let $\Omega$ represent a general sample space, which contains all possible outcomes.
Let $\mathcal{A}$ represent an event space, which is a $\sigma$-algebra of subsets of $\Omega$.
An event refers to a set of observations within the sample space.
The extended real numbers $\overline{\mathbb{R}} = \left [ -\infty,\infty \right ]$ denote possible scores.
Let $\mathcal{F}$ represent a convex class of probability measures on the space $(\Omega, \mathcal{A})$.
A probabilistic forecast is any probability measure $F \in \mathcal{F}$.
\begin{definition}[Scoring Rule]
\label{def:scoring_function}
A \textbf{scoring rule} $S:\mathcal{F} \times \Omega \rightarrow \overline{\mathbb{R}}$ is a function assigning numerical values to probability measure pairs $(P, i)$, where  $P \in \mathcal{F}$ as a probabilistic forecast is issued and  $i \in \Omega$ as an observation materializes.
\end{definition}

Let the distributional forecast $Q \in \mathcal{F}$ represent the forecaster's best estimate of the probability distribution.
In the context of multi-class classification tasks, $Q$ represents ground-truth vectors.
A scoring rule $S(P,Q)$ is defined as the expected score of $S(P,i)$ when $i \sim  Q$.
It means that
$S(P,Q) := \mathbb{E}_Q[ S(P,i) ] = \int S(P,i)dQ(i)$.
\begin{definition}[Strictly Proper Scoring Rule]
\label{def:strictly_proper}
The scoring rule is \textbf{proper} with respect to $\mathcal{F}$ if
\[
\left\{\begin{matrix}
S(P,Q) \geq S(Q,Q) & S~is~negatively~oriented \\ 
S(P,Q) \leq S(Q,Q) & S~is~positively~oriented
\end{matrix}\right.
\]
and it is \textbf{strictly proper} if equality holds if and only if $P=Q$.
\end{definition}

\subsection{Probabilistic Evaluation of Multi-Class Classification}
In single-label multi-class classification, the objective is to predict the specific class label, denoted as $\omega$, for a given sample characterized by a feature vector, denoted as $X$. 
In probabilistic classification, the goal is to learn the entire conditional distribution $p(\omega|X)$, which represents the probability distribution over classes conditioned on the given features \cite{qian2021label}.
This is accomplished through the use of a probabilistic classifier denoted as $f : \mathcal{X} \rightarrow \mathcal{F}$.
The set $\mathcal{F}$ of probability measures is typically identified with the probability $c$-simplex $\mathcal{C} = \{ p \in [0,1]^c | \sum_{i=1}^c p_i = 1 \}$ and probability distributions are represented by vectors $p \in \mathcal{C}$.
Therefore, the overall form of the scoring rules for single-label multi-class classification problems is $S(p,i)$, where $p$ is a probability vector obtained from the classifier and $i$ is the true class.

\begin{definition}[]
\label{def:brier_score}
\textbf{Brier Score} (or squared loss or quadratic score), denoted by $S_{BS}(p,i)$, is the most well-known strictly proper scoring rule and is defined as \cite{brier1950verification}
\begin{align}
\label{eq:brier_score}
S_{BS}(p,i) = \sum_{j=1}^{c} (p_j - y_j)^2
\end{align}
Where $c$ in the number of classes and $y$ is the ground-truth label as a vector $y = (y_1 , \cdots , y_c )$ such that $y_i = 1$ if the true class is $i$, and $y_j= 0$ otherwise.
\end{definition}

\begin{definition}[]
\label{def:logarithmic_loss}
\textbf{Logarithmic Loss} (or Cross-entropy loss or log-loss or ignorance score), denoted by $S_{LL}(p,i)$, is a strictly proper scoring rules and is defined as \cite{good1952rational}
\begin{align}
\label{eq:logarithmic_loss}
S_{LL}(p,i) = - \sum_{j=1}^c y_j \log(p_j)
\end{align}
Where $c$ in the number of classes and $y$ is the ground-truth label as a vector $y = (y_1 , \cdots , y_c )$ such that $y_i = 1$ if the true class is $i$, and $y_j= 0$ otherwise.
\end{definition}


In practice, competing forecast procedures are ranked by the average score \cite{gneiting2007strictly}.
Suppose that we wish to fit a parametric model $P_{\theta}$ based on random samples $X^{(1)}, \cdots , X^{(n)}$.
To estimate $\theta$, we might measure the goodness-of-fit by the mean score \cite{dawid2014theory}
\begin{align}
\label{eq:mean_score}
S_n(\theta) = \frac{1}{n} \sum_{i=1}^{n} S(P_{\theta}(X^{(i)}),\omega^{(i)})
\end{align}
where $S$ is a strictly proper scoring rule.
If $\theta_0$ denotes the true parameter, asymptotic arguments indicate that $\arg \min_\theta$ $S_n(\theta) \rightarrow \theta_0$ as $n \rightarrow \infty$ \cite{gneiting2007strictly}.
Let's suppose $S$ is a negatively oriented scoring rule like the Brier Score.
It suggests a general approach to estimation: choose a strictly proper scoring rule that is tailored to the problem at hand that minimizes $S_n(\theta)$ over the parameters space and take $\hat{\theta}_n = \arg \min_\theta S_n(\theta)$ as the optimum score estimator based on the scoring rule \cite{gneiting2007strictly}.
For positively oriented scoring rules like accuracy, $\hat{\theta}_n = \arg \max_\theta S_n(\theta)$.

\section{Literature Review}
\label{sec:3}
Evaluation metrics are crucial for assessing the performance of classification algorithms. 
They are used to evaluate the performance and effectiveness of classifiers, as well as to discriminate and select the optimal solution during the classification training process \cite{hossin2015review}.
Classification evaluation metrics can generally be divided into accuracy-based and confidence-based metrics.

Accuracy-based metrics only consider whether a prediction matches the true label, without regard to predictive confidence or uncertainty.
This category includes metrics like accuracy, error rate, precision, recall, F-measures, AUC, and related threshold-based measures.
Accuracy and error rate are among the most widespread evaluation metrics.
However, they have limitations such as providing less distinctive values and favoring the majority class \cite{mortaz2020imbalance,tharwat2020classification}.
Precision and recall focus on different targets, making them unsuitable for selecting the optimal solution \cite{hossin2015review}.
F-measures and AUC aim to address some of the shortcomings of individual metrics.
F-measures consider both precision and recall, while AUC reflects the overall ranking performance of a classifier \cite{mortaz2020imbalance}.
However, AUC has a high computational cost for large datasets \cite{hossin2015review}. 

Confidence-based metrics assess predictive confidence by rewarding calibrated probability estimates \cite{kull2015novel}.
These metrics reward probabilistic calibration and include strictly proper scoring rules.
In addition to the scoring rules introduced in the preliminaries, there are several other scoring rules that have been utilized in forecast verification \cite{wilks2011forecast}. Some examples include the spherical score, quadratic score, pseudospherical score, and continuous ranked probability score (CRPS) \cite{hersbach2000decomposition,tyralis2024review}.
Variations of these rules have been used in different fields. This includes electricity price forecasting to assess volatility over time, and wind/weather modeling to evaluate probabilistic predictions \cite{nowotarski2018recent,gaetan2024calibrated}. Financial prediction and spatial statistics also apply scoring rules \cite{tyralis2024review}.

A scoring rule is considered proper if the expected score is minimized by the true distribution of the outcome of interest \cite{resin2023classification}.
This means that a forecaster who uses a proper scoring rule will be incentivized to provide a forecast that is as close as possible to the true distribution.
In terms of elicitation, scoring rules have a significant role in motivating assessors to provide conscientious and truthful assessments \cite{gneiting2007strictly}.
Nonetheless, because the concept of strict propriety adds an additional level of uniqueness, the utilization of strictly proper scoring rules offers enhanced theoretical assurances regarding the optimality of truth-telling as a strategy \cite{gneiting2007strictly}.
This encourages predictors to provide more honest probabilistic forecasts.

In addition, assessing forecast characteristics like calibration and sharpness is important for understanding individual and combined forecasts \cite{winkler2019probability}.
Calibration examines the consistency between forecasts and outcomes, while sharpness measures the concentration of the forecast distribution for a variable of interest, regardless of outcomes \cite{gneiting2007strictly}.
The relationship between strictly proper scoring rules and measures of calibration and sharpness can be established by decomposing the rules into components.
A common decomposition enables the expression of a scoring rule as a function of both a calibration measure and a sharpness measure \cite{winkler2019probability,kull2015novel}.
By utilizing strictly proper scoring rules, the evaluation process can capture the probabilistic uncertainty inherent in classification problems and prevent overconfidence \cite{winkler2019probability}.
Accuracy and F-measures are two proper scoring rules, which are positively oriented. They are not strictly proper scoring rules because their maximum are not unique \cite{gneiting2007strictly}.
Therefore, strictly proper scoring rules such as Brier Score and Logarithmic Loss have more advantages than metrics derived from the confusion matrix.
In conclusion, this paper recommends the evaluation of multi-class classification using strictly proper scoring rules for the following reasons:
\begin{itemize}
\item \textbf{Elicitation}: Strictly proper scoring rules have desirable elicitation properties. They uniquely incentivize assessors to provide honest probabilistic forecasts by maximizing the expected score \cite{gneiting2007strictly}.
\item \textbf{Reliability}: Strictly proper scoring rules enable reliable assessments of uncertainty. Decomposing proper scoring rules into calibration and sharpness components facilitates the evaluation of forecast uncertainty. This prevents overconfidence by capturing the inherent probabilistic uncertainty in classification \cite{winkler2019probability,kull2015novel}.
\end{itemize}

While this paper recommends using strictly proper scoring rules, they are not always sufficient to ensure that more accurate forecasts receive better scores.
Previous evaluation metrics for multi-class classification tasks have the shortcoming of not explicitly favoring accurate predictions over misclassified ones. Metrics like Brier Score and Logarithmic Loss, while being strictly proper scoring rules, do not consistently assign better scores to predictions that result in true positives and true negatives compared to those that are false positives and false negatives. This can potentially lead to suboptimal model selection if a model achieves better scoring by making incorrect predictions rather than correct ones. The paper addresses this shortcoming by proposing novel superior scoring rules called Penalized Brier Score and Penalized Logarithmic Loss.

\section{Proposed Model}
\label{sec:4}
This section will first present the theoretical background, providing theoretical results that motivate the requirement for proper scoring rules that assign higher scores to correct predictions rather than incorrect ones. Next, the proposed methodology will be introduced. It will illustrate how the scoring rules can be modified by incorporating penalty terms. This serves to penalize incorrect predictions more strongly, thereby ensuring the scoring rules consistently assign higher scores to correct predictions as intended.

\begin{figure}[http]
	\centering
	\includegraphics[width=1.0\textwidth]{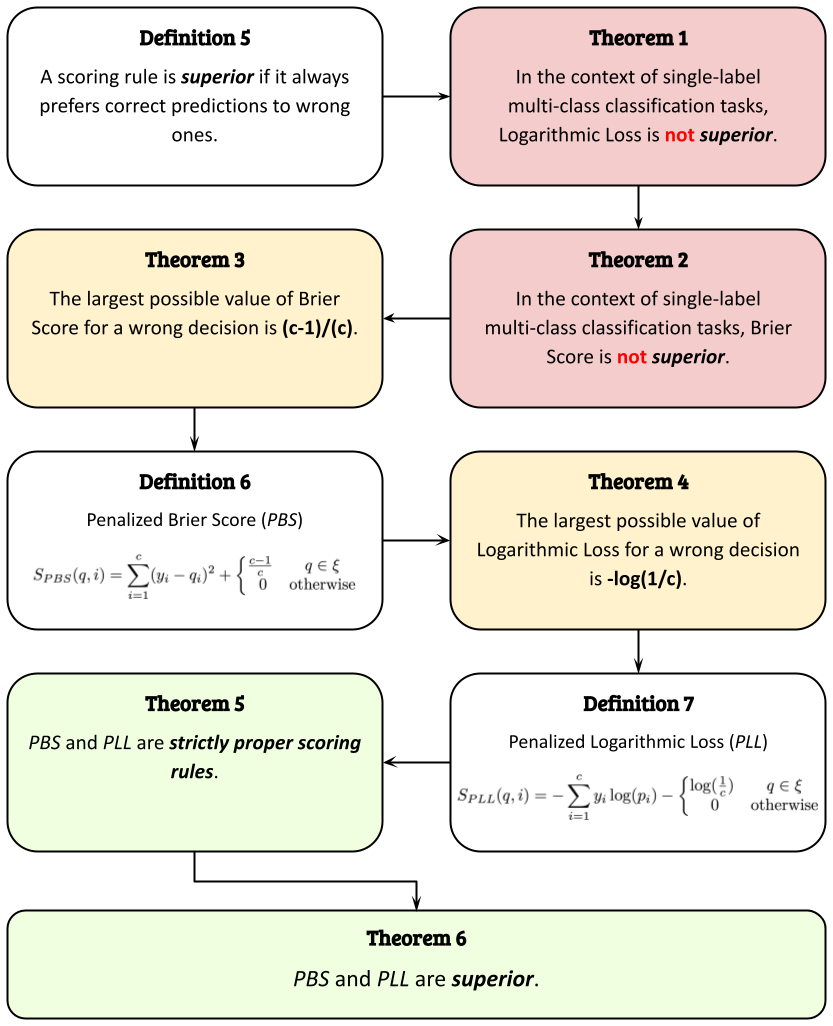}
	\caption{The big picture of theoretical analysis.}
	\label{fig:bigpic}
\end{figure}

\subsection{Theoretical Analysis}
The primary objective of any classifier is to accurately assign observations to their respective classes. Observations that are correctly classified are considered to be of greater value than those that are incorrectly classified. A classifier that can achieve a high rate of accurate classifications and a low error rate is deemed superior because it indicates a greater ability to identify patterns and differentiate between classes. Conversely, observations that are incorrectly classified suggest that the classifier has not accurately modeled the relationship between features and classes. These errors compromise the classifier's capacity to generalize and make accurate predictions on new samples. Correct classifications represent successful outcomes for the classifier, while errors indicate failures. Therefore, it is crucial to maximize correct classifications and minimize errors in order to develop an effective and useful classifier. The reliability of a classifier's performance is determined by its ability to consistently reproduce the correct labels instead of settling for inferior results that are prone to mistakes.

Fig. \ref{fig:bigpic} shows the big picture of the theoretical analysis presented in this section. The subsequent part of this section demonstrates that both Brier Scoring and Logarithmic Loss are indifferent to this preference. To prove this point, a novel attribute for scoring rules is introduced. This attribute guarantees that the scoring rule assigns a superior score to the observations that are correctly classified.

\begin{definition}[Superior Scoring Rule]
\label{def:3conditions}
Let $\psi$ and $\xi$ be two subsets of the set of all predictions by a certain classifier:
\begin{align*}
\psi &= \left\{ p ~|~ \arg \max p = \arg \max y \right\},
\\
\xi  &= \left\{ p ~|~ \arg \max p \neq \arg \max y \right\}.
\end{align*}
Where $y$ is the ground-truth label as a vector $y = (y_1 , \cdots , y_c )$ such that $\sum_{i=1}^c y_i = 1$ and $y_i = 1$ when the $i$ is the true class.
Therefore, $\psi$ contains true positive and true negative predictions, and $\xi$ contains false positive and false negative predictions.
Let $S(p, \omega)$ represent the score when the prediction $p \in \mathcal{P}$ is issued and the true class is $\omega$.
$S$ is \textbf{superior} if its scores for every member belonging to $\psi$ are always better than those of set $\xi$.
Let $p^{(1)} \in \psi$ with its corresponding true class $\omega^{(1)}$.
Let $p^{(2)} \in \xi$ with its corresponding true class $\omega^{(2)}$.
Then, if the condition
\[
\left\{\begin{matrix}
S(p^{(1)},\omega^{(1)}) < S(p^{(2)},\omega^{(2)}) & S~is~negatively~oriented \\ 
S(p^{(1)},\omega^{(1)}) > S(p^{(2)},\omega^{(2)}) & S~is~positively~oriented
\end{matrix}\right.
\]
always satisfied, $S$ is \textbf{superior}.
\end{definition}
Therefore, if a scoring rule has this property, it will always give better scores to correct predictions. In the following, we show theoretically that Logarithmic Loss and Brier Score are not superior.
\begin{theorem}[LL Property]
\label{thrm:LL_Property}
In the context of single-label multi-class classification tasks, $c>2$, Logarithmic Loss is not \textbf{superior}.
\end{theorem}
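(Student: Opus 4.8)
The plan is to disprove the \emph{superior} property by exhibiting an explicit counterexample: a correctly classified prediction $p^{(1)} \in \psi$ and a misclassified prediction $p^{(2)} \in \xi$ for which the negatively oriented superiority inequality $S_{LL}(p^{(1)},\omega^{(1)}) < S_{LL}(p^{(2)},\omega^{(2)})$ of Definition~\ref{def:3conditions} fails. The essential structural observation that makes this tractable is that Logarithmic Loss ignores every coordinate of the predicted vector except the one indexed by the true class: since $y_j = 0$ away from the true index, Definition~\ref{def:logarithmic_loss} collapses to $S_{LL}(p,\omega) = -\log p_\omega$, a strictly decreasing function of the mass $p_\omega$ placed on the correct label. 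Consequently, to break superiority it suffices to find a $\psi$-member whose true-class mass is \emph{smaller} than that of some $\xi$-member.

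First I would construct the correct prediction. Take the true class to be class $1$ and set, for a small $\gamma > 0$,
\begin{align*}
p^{(1)} = \left( \tfrac{1}{c} + (c-1)\gamma,\; \tfrac{1}{c} - \gamma,\; \ldots,\; \tfrac{1}{c} - \gamma \right),
\end{align*}
which lies on the simplex $\mathcal{C}$ and, for $\gamma$ small, has its unique maximum at coordinate $1$, so $p^{(1)} \in \psi$ with $S_{LL}(p^{(1)},1) = -\log\bigl(\tfrac{1}{c} + (c-1)\gamma\bigr) \to \log c$ as $\gamma \to 0$. Next I would construct a misclassified prediction whose true-class mass sits just below $\tfrac12$: again with true class $1$ and a small $\delta > 0$, set $p^{(2)}_1 = \tfrac12 - \delta$, $p^{(2)}_2 = \tfrac12 + \delta$, and $p^{(2)}_j = 0$ for $j \ge 3$ (which is available precisely because $c > 2$). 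Here the maximum sits at coordinate $2 \neq 1$, so $p^{(2)} \in \xi$, and $S_{LL}(p^{(2)},1) = -\log\bigl(\tfrac12 - \delta\bigr) \to \log 2$.

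The comparison then reduces to the elementary inequality $\tfrac{1}{c} + (c-1)\gamma \le \tfrac12 - \delta$, equivalently $S_{LL}(p^{(1)},1) \ge S_{LL}(p^{(2)},1)$. Because $c > 2$ forces $\tfrac{1}{c} \le \tfrac13 < \tfrac12$, I can choose $\gamma$ and $\delta$ small enough that this holds strictly, yielding $S_{LL}(p^{(1)},1) > S_{LL}(p^{(2)},1)$ — a correct prediction scored \emph{worse} than an incorrect one — which directly contradicts the negatively oriented branch of superiority. I would close by confirming that both vectors are genuine members of $\mathcal{C}$ with the claimed $\arg\max$ behaviour. There is no real obstacle here, only one point deserving emphasis: the role of the hypothesis $c > 2$. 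It is exactly what creates the gap $\tfrac{1}{c} < \tfrac12$ and hence room for a $\psi$-vector to carry less true-class mass than a $\xi$-vector. For $c = 2$ that gap vanishes — every misclassification has true-class mass below $\tfrac12$ while every correct classification has mass above it — so the construction cannot work, consistent with the theorem's restriction.
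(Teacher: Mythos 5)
Your proof is correct and rests on the same key reduction as the paper's: $S_{LL}(p,\omega) = -\log p_\omega$, so superiority fails exactly when a misclassified vector carries at least as much true-class mass as a correctly classified one. The paper establishes this via a case analysis on $q_i$ versus $\alpha$ without ever exhibiting a pair realizing the failing case, whereas you construct one explicitly and — unlike the paper — pinpoint where the hypothesis $c>2$ enters (it guarantees $\tfrac{1}{c} < \tfrac{1}{2}$, leaving room for a $\psi$-vector with smaller true-class mass than a $\xi$-vector), so your version is, if anything, the more complete argument.
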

\begin{proof}
See Appendix \ref{proof:LL_Property}.
\end{proof}

\begin{theorem}[BS Property]
\label{thrm:BS_Property}
In the context of single-label multi-class classification tasks, $c>2$, Brier Score is not \textbf{superior}.
\end{theorem}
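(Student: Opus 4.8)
The plan is to disprove superiority by exhibiting a single explicit counterexample. The definition of a superior rule demands that the relevant strict inequality hold for \emph{every} pair drawn from $\psi\times\xi$, so producing one pair $(p^{(1)},\omega^{(1)})$ with $p^{(1)}\in\psi$ and $(p^{(2)},\omega^{(2)})$ with $p^{(2)}\in\xi$ for which that inequality fails is already sufficient to negate the universally quantified statement. Since the Brier Score is negatively oriented, a pair violates superiority precisely when the correctly classified forecast is scored no better than the misclassified one, i.e. when $S_{BS}(p^{(1)},\omega^{(1)})\ge S_{BS}(p^{(2)},\omega^{(2)})$.

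First I would reuse the concrete $c=3$ instance from the motivation. With true label $y=(0,1,0)$ (true class $2$), the forecast $A=(0.33,0.34,0.33)$ lies in $\psi$ because $\arg\max A=2$, while $B=(0.51,0.49,0)$ lies in $\xi$ because $\arg\max B=1\neq 2$. Evaluating \eqref{eq:brier_score} directly gives $S_{BS}(A,2)=0.6534$ and $S_{BS}(B,2)=0.5202$, so the correctly classified forecast is scored strictly worse than the misclassified one. This settles the case $c=3$.

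To cover every $c>2$, I would promote this into a parametric family. Fix the true class to be class $1$ and take $p^{(1)}$ to be a nearly uniform but correctly peaked vector, e.g. $p^{(1)}_1=\tfrac{1}{c}+\delta$ and $p^{(1)}_j=\tfrac{1}{c}-\tfrac{\delta}{c-1}$ for $j\neq 1$ with a small $\delta>0$; this keeps $p^{(1)}\in\mathcal{C}$ and guarantees $\arg\max p^{(1)}=1$, so $p^{(1)}\in\psi$. Let $p^{(2)}=(0.49,0.51,0,\ldots,0)\in\xi$. As $\delta\to 0^{+}$ the Brier Score of $p^{(1)}$ tends to $\tfrac{c-1}{c}$, which is at least $\tfrac{2}{3}$ for all $c\ge 3$, whereas $S_{BS}(p^{(2)},1)=0.5202$ independently of $c$. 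Hence $S_{BS}(p^{(1)},1)>S_{BS}(p^{(2)},1)$ holds for every $c>2$ once $\delta$ is chosen small enough, and in fact a single fixed small $\delta$ works uniformly in $c$.

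The work here is almost entirely in the construction rather than the estimation, since the arithmetic is elementary; the only care needed is (i) checking that $p^{(1)}$ attains its maximum at the true class while staying near uniform, and (ii) confirming that the near-uniform value $\tfrac{c-1}{c}$ stays strictly above the fixed $0.5202$ as $c$ grows, which it does because $\tfrac{c-1}{c}$ is increasing in $c$ and already exceeds $0.5202$ at $c=3$. I expect the main subtlety, if any, to be presentational: making explicit that a lone counterexample suffices to refute the universally quantified definition, and that the construction degrades gracefully across all admissible class counts rather than relying on a single fortunate dimension.
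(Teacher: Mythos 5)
Your proof is correct, and it takes a genuinely different and in fact tighter route than the paper's. The paper argues by fixing $x\in\psi$ with hot value $\alpha$ and splitting into three cases according to whether the hot value of $q\in\xi$ equals, falls below, or exceeds $\alpha$; it gives an analytic variance-decomposition argument for the first case and then resorts to Monte Carlo simulation for the other two, concluding non-superiority from the observation that the desired inequality fails in a nonzero fraction (about $37\%$ in the $q_i=\alpha+\beta$ case) of randomly sampled pairs. Your approach instead observes that superiority is a universally quantified property over $\psi\times\xi$, so a single explicit counterexample refutes it: the motivating pair $A=(0.33,0.34,0.33)\in\psi$, $B=(0.51,0.49,0)\in\xi$ with $S_{BS}(A,2)=0.6534>0.5202=S_{BS}(B,2)$ settles $c=3$, and your near-uniform family $p^{(1)}_1=\tfrac1c+\delta$ versus $p^{(2)}=(0.49,0.51,0,\dots,0)$ extends this uniformly to all $c>2$ since $\tfrac{c-1}{c}\ge\tfrac23>0.5202$. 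Your construction is fully rigorous (no simulation), shorter, and your limiting value $\tfrac{c-1}{c}$ even anticipates the paper's Theorem~\ref{thrm:MaxBS}; what the paper's longer treatment buys is quantitative information about how often the Brier Score misranks pairs, which motivates its later claim that BS is empirically better behaved than Logarithmic Loss, but that information is not needed to prove the stated theorem. The only presentational point worth making explicit in a final write-up is the one you already flag: that disproving a universally quantified condition requires only one violating pair.
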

\begin{proof}
See Appendix \ref{proof:BS_Property}.
\end{proof}

According to the Monte Carlo simulations provided in Appendix \ref{proof:BS_Property}, it is shown that Brier Score outperforms Logarithmic Loss. However, to ensure that the scoring rule consistently assigns a higher score to accurately classified observations, a new criterion is necessary, which can be attained by modifying the current scoring rules. Specifically, if the probability vector belongs to set $\xi$, an extra penalty was integrated into the scoring rule.
Since this research focuses on the Brier Score and Logarithmic Loss, which should be minimized to be optimal, our aim is to assign a penalty to incorrect predictions that the penalty value is higher than the highest score possible for a correct prediction. To accomplish this, the penalty should be established as equivalent to the maximum value of the scoring rule for set $\psi$.

\begin{theorem}[Maximum $S_{BS}$ in $\psi$] \label{thrm:MaxBS}
The largest possible value of the Brier Score for the set $\psi$ is
$\frac{c-1}{c}$.
\end{theorem}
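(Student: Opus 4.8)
The plan is to reduce the statement to a constrained optimization over the probability simplex and then exhibit a clean algebraic upper bound that is tight at the uniform forecast. Because the Brier Score is invariant under permuting the class labels, I would first assume without loss of generality that the true class is the first one, so that $y=(1,0,\dots,0)$ and
\begin{align}
S_{BS}(p,\omega) = (1-p_1)^2 + \sum_{j=2}^{c} p_j^2. \nonumber
\end{align}
Membership $p\in\psi$ means the true class attains the maximal predicted probability, i.e. $p_1 = \max_j p_j$. Two consequences of this are the workhorses of the argument: first, $p_j \le p_1$ for every $j$; and second, since $c$ nonnegative numbers summing to $1$ cannot all lie below their average, the maximum obeys $p_1 \ge 1/c$.

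Next I would control the tail sum of squares. Using $0 \le p_j \le p_1$ for each $j\ge 2$ gives $p_j^2 \le p_1 p_j$, and summing over $j \ge 2$ together with $\sum_{j=2}^{c} p_j = 1 - p_1$ yields $\sum_{j=2}^{c} p_j^2 \le p_1(1-p_1)$. Substituting into the displayed expression,
\begin{align}
S_{BS}(p,\omega) \le (1-p_1)^2 + p_1(1-p_1) = 1 - p_1. \nonumber
\end{align}
Invoking $p_1 \ge 1/c$ from the previous step then collapses the bound to $S_{BS}(p,\omega) \le 1 - 1/c = \frac{c-1}{c}$, which is the claimed value.

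It remains to certify tightness, and this is where I would read off the equality conditions accumulated along the way. Equality in $p_j^2 \le p_1 p_j$ forces each $p_j \in \{0, p_1\}$, while equality in $p_1 \ge 1/c$ forces $p_1 = 1/c$; the only probability vector meeting both is the uniform forecast $p=(1/c,\dots,1/c)$, for which a direct substitution returns exactly $\frac{c-1}{c}$. The one genuinely delicate point I expect is the tie-breaking convention hidden inside $\arg\max$: the extremal vector is uniform, so the true class is only tied for the maximum rather than its strict maximizer, and I would need the definition of $\psi$ to admit such ties (equivalently, to treat $\psi$ as the closed region $p_1 = \max_j p_j$) for the supremum to be attained; otherwise the result should be read as a tight supremum approached by forecasts arbitrarily close to uniform.

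As an independent sanity check, I would note a convexity argument that reaches the same conclusion from a different direction: $S_{BS}$ is convex in $p$ and $\psi$ is a polytope, so its maximum is attained at a vertex. Enumerating these vertices shows they are exactly the uniform distributions supported on some $m$ classes that include the true one, each giving Brier value $\frac{m-1}{m}$; this quantity increases in $m$ and is maximized at $m=c$, recovering $\frac{c-1}{c}$ and confirming that the uniform forecast is the worst correctly classified prediction.
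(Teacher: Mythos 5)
Your proof is correct, and it takes a genuinely different and in fact tighter route than the paper's. The paper bounds the non-hot sum of squares by $\sum_{k\neq i} x_k^2 \le (1-\alpha)^2$, then argues informally that because $(1-\alpha)^2$ is the ``primary component'' the score is maximized by minimizing $\alpha$ to $\tfrac1c+\epsilon$, and finally just evaluates the score at the uniform vector; note that the paper's own inequality only yields $S_{BS}\le 2(1-\alpha)^2$, which is not tight for $c>2$, so the identification of the maximizer there rests on a heuristic rather than a closed bound. Your key inequality $p_j^2\le p_1p_j$ (valid precisely because membership in $\psi$ forces $p_j\le p_1$) sharpens the tail bound to $\sum_{j\neq i}p_j^2\le p_1(1-p_1)$, which telescopes the whole score to $S_{BS}\le 1-p_1\le 1-\tfrac1c$ and makes the optimization one-dimensional and airtight; your equality analysis then pins down the uniform forecast as the unique extremizer. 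You also flag the tie-breaking issue in $\arg\max$ explicitly --- the paper hides the same issue behind ``$\alpha=\tfrac1c+\epsilon$, assume $\epsilon=0$,'' so strictly speaking the stated value is a supremum unless $\psi$ is taken to include ties --- and your convexity-plus-vertex-enumeration check (values $\tfrac{m-1}{m}$ over uniform distributions on $m$ classes containing the truth) is a clean independent confirmation that the paper does not provide. In short, your argument proves the same statement with a complete chain of inequalities where the paper relies on an asserted monotonicity.
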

\begin{proof}
See Appendix \ref{proof:MaxBS}.
\end{proof}

\begin{theorem}[Maximum $S_{LL}$ in $\psi$] \label{thrm:MaxLL}
The largest possible value of Logarithmic Loss for the set $\psi$ is
$- \log (\frac{1}{c})$.
\end{theorem}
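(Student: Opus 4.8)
The plan is to exploit the fact that, because $y$ is a one-hot vector, the Logarithmic Loss collapses to a function of a single coordinate. Writing $i$ for the true-class index (so $y_i = 1$ and $y_j = 0$ for $j \neq i$) and substituting into Definition~\ref{def:logarithmic_loss} gives $S_{LL}(p,i) = -\sum_{j=1}^c y_j \log(p_j) = -\log(p_i)$. Thus the entire score depends only on the probability mass $p_i$ that the forecast places on the true class; all other coordinates of $p$ are irrelevant to its value, subject only to the simplex constraint $\sum_{j=1}^c p_j = 1$.

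Next I would translate the membership condition $p \in \psi$ into a constraint on $p_i$. By Definition~\ref{def:3conditions}, $p \in \psi$ means $\arg\max p = \arg\max y = i$, i.e. $p_i = \max_j p_j$. Since the coordinates of $p$ are nonnegative and sum to $1$, the largest coordinate can be no smaller than the average, so $p_i \geq \tfrac{1}{c}$, with equality exactly at the uniform forecast $p = (\tfrac{1}{c}, \dots, \tfrac{1}{c})$.

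Finally I would combine the two observations. The map $t \mapsto -\log t$ is strictly decreasing on $(0,1]$, so $S_{LL} = -\log(p_i)$ is maximized over $\psi$ precisely when $p_i$ attains its smallest admissible value $\tfrac{1}{c}$. Evaluating there yields the claimed bound $S_{LL} = -\log(\tfrac{1}{c})$, and the uniform forecast witnesses that this value is attained, so the supremum is in fact a maximum.

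I do not anticipate a genuine obstacle: the argument is structurally identical to the companion result for the Brier Score (Theorem~\ref{thrm:MaxBS}) and reduces to the elementary inequality $\max_j p_j \geq \tfrac{1}{c}$ together with the monotonicity of $-\log$. The only point requiring a little care is the tie at the boundary -- at the uniform forecast every coordinate equals $\tfrac{1}{c}$, so $\arg\max p$ is not uniquely determined. I would resolve this by adopting the convention, consistent with the motivating example $A = [0.33, 0.34, 0.33]$, that a forecast counts as a correct classification whenever the true class attains the maximal probability, so that the closed constraint $p_i \geq \tfrac{1}{c}$ and its endpoint both belong to $\psi$ and the maximum is genuinely achieved.
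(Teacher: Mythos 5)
Your proposal is correct and follows essentially the same route as the paper's proof: collapse the loss to $-\log(p_i)$ via the one-hot label, observe that $p\in\psi$ forces $p_i\ge \tfrac 1c$, and invoke the monotonicity of $-\log$. The only (cosmetic) difference is that the paper writes the minimal hot value as $\tfrac 1c+\epsilon$ and then sets $\epsilon=0$, whereas you handle the boundary tie at the uniform forecast explicitly with an argmax convention, which is if anything slightly cleaner.
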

\begin{proof}
See Appendix \ref{proof:MaxLL}.
\end{proof}

Based on the highest possible scores for the Brier Score and Logarithmic Loss when predictions are incorrect, an adjusted form of these scoring rules can be defined.
The modified Brier Score with the penalty term, Penalized Brier Score (\textit{PBS}), can be expressed as:
\begin{align}
\label{PBS}
S_{PBS}(q,i) = \sum_{i=1}^{c}(y_i-q_i)^2 + 
\left\{\begin{matrix}
\frac{c-1}{c} & q \in \xi\\\ 
0 & \text{otherwise}
\end{matrix}\right.
\end{align}
The modified Logarithmic Loss with the penalty term, Penalized Logarithmic Loss (\textit{PLL}), can be expressed as:
\begin{align}
\label{PLL}
S_{PLL}(q,i) = - \sum_{i=1}^{c} y_i \log(p_i) - 
\left\{\begin{matrix}
\log (\frac{1}{c}) & q \in \xi\\\ 
0 & \text{otherwise}
\end{matrix}\right.
\end{align}
where $y$ is the ground-truth vector, $q$ is the predicted probability vector by the probabilistic classifier, and $c$ is the number of classes.
As shown in the following theorems, the two proposed evaluation metrics are not only strictly proper scoring rules but also superior.

\begin{theorem}[\textit{PBS} \& \textit{PLL} Scoring Rules] \label{thrm:PBSPLLScoringRules}
\textit{PBS} and \textit{PLL} are strictly proper scoring rules.
\end{theorem}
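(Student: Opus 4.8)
The plan is to leverage the fact that both the Brier Score $S_{BS}$ and the Logarithmic Loss $S_{LL}$ are already strictly proper (Definitions~\ref{def:brier_score} and \ref{def:logarithmic_loss}), and to show that the added penalty term does not destroy this property. I would write the penalized rules as $S_{PBS}(p,i) = S_{BS}(p,i) + \tfrac{c-1}{c}\,\mathbf{1}[\arg\max p \neq i]$ and $S_{PLL}(p,i) = S_{LL}(p,i) - \log(\tfrac1c)\,\mathbf{1}[\arg\max p \neq i]$, where $\mathbf{1}[\cdot]$ is the indicator of the event that the forecast mode disagrees with the materialized class $i$ (equivalently $p\in\xi$). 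Since $\tfrac{c-1}{c}>0$ and $-\log(\tfrac1c)=\log c>0$, in both cases the penalty is a strictly positive constant $\lambda$ times this indicator.

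First I would take the expectation over $i \sim Q$ to obtain the expected score $S(P,Q)$ required by Definition~\ref{def:strictly_proper}. Because the penalty depends on $P$ only through $k := \arg\max P$ and on $i$ only through whether $i=k$, its expectation is $\lambda\,\mathbb{E}_{i\sim Q}\mathbf{1}[i\neq k] = \lambda\,(1 - Q_k)$. Thus
\[
S_{PBS}(P,Q) = S_{BS}(P,Q) + \tfrac{c-1}{c}\,\bigl(1 - Q_{\arg\max P}\bigr),
\]
and analogously for $S_{PLL}$ with $\lambda = -\log(\tfrac1c)$. The decisive structural observation is that the penalty expectation $1 - Q_{\arg\max P}$ is minimized over all forecasts $P$ exactly when $\arg\max P$ points at the mode of $Q$, since $Q_{\arg\max P} \le \max_j Q_j = Q_{\arg\max Q}$ for every $P$.

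The core step is then to form the difference against the reference $S(Q,Q)$ and express it as a sum of two non-negative terms:
\[
S_{PBS}(P,Q) - S_{PBS}(Q,Q) = \bigl[S_{BS}(P,Q) - S_{BS}(Q,Q)\bigr] + \tfrac{c-1}{c}\bigl[Q_{\arg\max Q} - Q_{\arg\max P}\bigr].
\]
The first bracket is $\ge 0$ by strict propriety of the Brier Score, and the second is $\ge 0$ by the mode inequality above; hence the sum is $\ge 0$, establishing propriety. For strict propriety I would note that if the sum vanishes then the first bracket must vanish, which forces $P=Q$; conversely $P=Q$ makes both brackets zero. The identical argument establishes the claim for $S_{PLL}$.

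The main obstacle is that the penalty is a discontinuous indicator, so the usual differentiable or convex-analytic route to propriety (via the concavity of the expected-score function) is unavailable: one cannot simply perturb $P$ and take derivatives. The argument above sidesteps this by never differentiating; it uses only (i) the already-established strict propriety of the base rule and (ii) the elementary fact that a positive penalty on mode disagreement is minimized, in expectation, by forecasting the true mode, which the truthful forecast $P=Q$ automatically does. A secondary point to handle carefully is ties in $\arg\max$ when $Q$ has several maximal entries: the inequality $Q_{\arg\max P} \le \max_j Q_j$ holds regardless of tie-breaking, so the conclusion is unaffected, but I would fix an explicit tie convention to keep the indicator well defined.
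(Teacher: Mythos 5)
Your proof is correct, and it takes a more careful route than the paper's. The paper argues in three steps: (i) $S_{PBS}(P,Q)\geq S_{BS}(P,Q)$ because the penalty is nonnegative, (ii) $S_{BS}(P,Q)>S_{BS}(Q,Q)$ by strict propriety of the base rule, and (iii) $S_{PBS}(Q,Q)=S_{BS}(Q,Q)$, i.e.\ the truthful forecast incurs no penalty. Step (iii) is the weak link: in expectation over $i\sim Q$ the penalty of the forecast $Q$ is $\lambda\bigl(1-Q_{\arg\max Q}\bigr)$, which vanishes only when $Q$ is degenerate (one-hot). That suffices for the single-label classification setting the paper cares about, but it does not establish strict propriety over a general convex class $\mathcal{F}$ as Definition~\ref{def:strictly_proper} requires. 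Your decomposition
\[
S_{PBS}(P,Q)-S_{PBS}(Q,Q)=\bigl[S_{BS}(P,Q)-S_{BS}(Q,Q)\bigr]+\lambda\bigl[Q_{\arg\max Q}-Q_{\arg\max P}\bigr]
\]
replaces the paper's equality (iii) with the mode inequality $Q_{\arg\max P}\leq\max_j Q_j$, so both brackets are nonnegative for \emph{every} $Q$ in the simplex, and strictness still follows because the first bracket vanishes only at $P=Q$. What the paper's argument buys is brevity in the degenerate-$Q$ case that actually arises in practice; what yours buys is validity for arbitrary $Q$, an explicit closed form $\lambda(1-Q_{\arg\max P})$ for the expected penalty, and an honest treatment of $\arg\max$ ties, none of which the paper addresses. (As a minor point, the paper's final displayed chain also contains a typo, writing $S_{BS}(P,Q)>S_{BS}(P,Q)$ where $S_{BS}(Q,Q)$ is intended; your version has no such slip.)
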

\begin{proof}
See Appendix \ref{proof:PBSPLLScoringRules}.
\end{proof}

\begin{theorem}[\textit{PBS} \& \textit{PLL} Property] \label{thrm:PBSPLLProperty}
\textit{PBS} and \textit{PLL} are \textbf{superior}.
\end{theorem}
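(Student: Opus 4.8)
The plan is to exploit the fact that the penalty added to each rule on the set $\xi$ is calibrated to equal exactly the maximum attainable base score on the set $\psi$, as established in Theorems \ref{thrm:MaxBS} and \ref{thrm:MaxLL}. Since both \textit{PBS} and \textit{PLL} are negatively oriented, the Definition of a superior scoring rule requires me to show that for every $p^{(1)} \in \psi$ with true class $\omega^{(1)}$ and every $p^{(2)} \in \xi$ with true class $\omega^{(2)}$, one has $S(p^{(1)},\omega^{(1)}) < S(p^{(2)},\omega^{(2)})$. The whole argument reduces to comparing each of these scores against the penalty constant.

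First I would treat the correct predictions. For $p^{(1)} \in \psi$ the penalty term vanishes, so $S_{PBS}(p^{(1)},\omega^{(1)}) = S_{BS}(p^{(1)},\omega^{(1)})$ and $S_{PLL}(p^{(1)},\omega^{(1)}) = S_{LL}(p^{(1)},\omega^{(1)})$. Theorem \ref{thrm:MaxBS} bounds the former by $\frac{c-1}{c}$ and Theorem \ref{thrm:MaxLL} bounds the latter by $-\log(\frac{1}{c})$, so every correctly classified prediction receives a penalized score that is at most the respective penalty constant. Next I would treat the misclassifications: for $p^{(2)} \in \xi$ the penalty is active, giving $S_{PBS}(p^{(2)},\omega^{(2)}) = S_{BS}(p^{(2)},\omega^{(2)}) + \frac{c-1}{c}$ and $S_{PLL}(p^{(2)},\omega^{(2)}) = S_{LL}(p^{(2)},\omega^{(2)}) - \log(\frac{1}{c})$.

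The crux is to establish that the base scores are \emph{strictly} positive on $\xi$. The quantity $S_{BS}(p,i) = \sum_{j} (p_j - y_j)^2$ vanishes only when $p = y$, and $S_{LL}(p,i) = -\log(p_i)$ (with $i$ the true class) vanishes only when $p_i = 1$; in either degenerate case $\arg\max p = \arg\max y$, which would place $p$ in $\psi$ and contradict $p^{(2)} \in \xi$. Hence each penalized misclassification score strictly exceeds its penalty constant. Chaining the two bounds then yields $S_{PBS}(p^{(1)},\omega^{(1)}) \leq \frac{c-1}{c} < S_{PBS}(p^{(2)},\omega^{(2)})$, and analogously $S_{PLL}(p^{(1)},\omega^{(1)}) \leq -\log(\frac{1}{c}) < S_{PLL}(p^{(2)},\omega^{(2)})$, which is precisely the strict inequality demanded by superiority.

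The main obstacle is not the chain of inequalities but securing the strictness at the single point where the two bounds could otherwise collide: the worst case pairs a correct prediction sitting exactly at the maximum $\frac{c-1}{c}$ (or $-\log(\frac{1}{c})$) against a misclassification whose raw score is as small as possible. I must argue carefully that the one-hot structure of $y$ forbids any member of $\xi$ from attaining a raw Brier or logarithmic score of zero, so that the misclassification strictly overshoots the penalty constant even in this extremal configuration. Once strict positivity of the base rules on $\xi$ is pinned down, the argument closes immediately and covers all $c > 1$ uniformly.
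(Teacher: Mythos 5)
Your proof is correct and takes essentially the same approach as the paper's, which simply observes that the penalty equals the maximum attainable score on $\psi$, so that penalized misclassifications necessarily score worse than any correct prediction. Your version is in fact more careful than the paper's two-sentence argument: you explicitly secure the strict inequality by showing the base Brier and logarithmic scores cannot vanish on $\xi$ (since a zero base score would force $\arg\max p = \arg\max y$ and hence membership in $\psi$), a point the paper leaves implicit.
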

\begin{proof}
The penalty term represents the maximum score that can be assigned to a correct prediction. Consequently, the scoring rule modified by the penalty term passively assigns higher scores to correct predictions over incorrect ones in a consistent manner.
\end{proof}

Penalties in Eq. (\ref{PBS}) and Eq. (\ref{PLL}) are designed to be applied only to incorrect predictions.
Therefore, these penalties depend on the values of $p$ in Eq. (\ref{eq:brier_score}) and Eq. (\ref{eq:logarithmic_loss}).
Furthermore, the more number of incorrect predictions by $\theta$ in Eq. (\ref{eq:mean_score}), the worse score it receives.
As a result, the modified evaluation metrics can more reliably identify optimal model checkpoints and early stopping points that achieve better generalization performance compared to the original metrics like Brier Score and Logarithmic Loss.

\subsection{Algorithm}
In this section, we introduce the vectorization methods of the proposed evaluation metrics.
The pseudocode of the \textit{Penalized Brier Score} and the \textit{Penalized Logarithmic Loss} is demonstrated in Algorithm \ref{alg:PBS} and Algorithm \ref{alg:PLL}, respectively.
The \textit{PBS} and \textit{PLL} algorithms leverage Penalizing Algorithm \ref{alg:penalizing} to incorporate penalties into the Brier Score and the Logarithmic Loss.

The Penalizing algorithm detects wrong predictions and penalizes them. The output is in the form of a vector, where $0$ means that the prediction was correct, otherwise the prediction was wrong and was penalized. This payoff vector is then used by other scoring functions to calculate performance metrics.
It takes as input predicted probabilities $q \in [0,1]^{n \times c}$, ground-truth labels $y \in \{0,1\}^{n \times c}$, and a \textit{penalty} value for wrong predictions. It first computes the \textit{hot values} for each sample, which are the predicted probabilities for the correct class. It then subtracts the \textit{hot values} from the predicted probabilities to obtain a \textit{container} of values that are positive for incorrect predictions. All negative values in the \textit{container} are set to zero, and the sum of the remaining values is computed for each sample.
The \textit{penalty} value is then multiplied by a vector of \textit{ones} to become a vector. Finally, all positive values in the \textit{container} are replaced by \textit{penalty} and the \textit{container} is returned as the \textit{payoff} for each sample.

The \textit{PBS} algorithm takes as input predicted probabilities $q$ and ground-truth labels $y$. It first computes the \textit{penalty} factor as $(c - 1) / (c^2)$, where $c$ is the number of classes. It then calls the \textit{Penalizing} algorithm to compute the \textit{payoff} for each sample. Next, the Brier scores are computed as the mean squared difference between the predicted probabilities and the ground-truth labels. Finally, the \textit{PBS} is computed as the mean of the sum of the Brier scores and the \textit{payoffs}.

The \textit{PLL} algorithm is similar to the \textit{PBS} algorithm, but it uses the Logarithmic Loss as the base score instead of the Brier Score. The Logarithmic Loss is computed as the mean of the sum of the element-wise product of the ground-truth labels and the negative logarithm of the predicted probabilities. Finally, the \textit{payoff} is subtracted from the Logarithmic Loss before taking the mean.

\begin{algorithm}
\caption{Penalizing}
\label{alg:penalizing}
\begin{algorithmic}[1]  

\Require {
\begin{itemize}
\item $\textbf{q}$: Predictions of size $n \times c$,
\item $\textbf{y}$: Ground-truth labels of size $n \times c$, where $n$ is the number of samples and $c$ is the number of classes. 
\item $\textbf{penalty}$: A scalar for wrong predictions.
\end{itemize}
}

\State Consider operator $==$ applied to an array means applying element-wise $==$ to the array. The result is a condition array consisting of $True$ and $False$.

\State Consider function $\mathsf{where}(condition,x,y)$ gives three arrays with the same shape.
The result is taken from $x$ where $condition$ is $True$ or $y$ where it is $False$.

\State Consider function $\mathsf{sum}(x,axis)$ 
computes the sum of the $x$ elements over $axis$.
The result is an array with the same shape as $x$, with the specified axis removed.

\State Consider function $\mathsf{mean}(x,axis)$ 
computes the average of the $x$ elements over $axis$.
The result is an array with the same shape as $x$, with the specified axis removed.

\State Consider function $\mathsf{zeros}(x,y)$ return a zero array with shape $x \times y$. And function $\mathsf{ones}(x,y)$ return one array with shape $x \times y$. If y is not provided the result would be 1-dimension.

\State $\mathsf{hotvalues} \gets \mathsf{sum}(\mathsf{where}(y == 1, q, y),axis=1)$

\State $\mathsf{container} \gets q - \mathsf{hotvalues}\mathit{1}_c^T$

\State $\mathsf{zeros} \gets \mathsf{zeros}(n,c)$

\State $\mathsf{container} \gets \mathsf{where}(\mathsf{container}<0,\mathsf{zeros},\mathsf{container})$ 

\State $\mathsf{container} \gets \mathsf{sum}(\mathsf{container}, \mathsf{axis}=1)$

\State $\mathsf{penalty} \gets \mathsf{ones}(c) \cdot \mathsf{penalty} $

\State $\mathsf{payoff} \gets \mathsf{where}(\mathsf{container} > 0, \mathsf{penalty}, \mathsf{container})$

\State \Return $\mathsf{payoff}$

\end{algorithmic}
\end{algorithm}

\begin{algorithm}
\caption{Penalized Brier Score}
\label{alg:PBS}
\begin{algorithmic}[1]  

\Require {
Probability measures $\textbf{q}$ of size $n \times c$,
Ground-truth labels $\textbf{y}$ of size $n \times c$, where $n$ is the number of samples and $c$ is the number of classes
}

\State $\mathsf{penalty} \gets (c - 1) / (c^{2})$

\State $\mathsf{payoff} \gets \mathsf{Penalizing}(q, y, \mathsf{penalty})$

\State $\mathsf{bs} \gets \mathsf{mean}(\mathsf{square}(y - q), \mathsf{axis}=1)$ 
\algorithmiccomment{Brier Scores}

\State \Return $\mathsf{mean}(\mathsf{bs} + \mathsf{payoff})$

\end{algorithmic}
\end{algorithm}

\begin{algorithm}
\caption{Penalized Logarithmic Loss}
\label{alg:PLL}
\begin{algorithmic}[1]

\Require {Probability measures $\textbf{q}$ of size $n \times c$,
Ground-truth labels $\textbf{y}$ of size $n \times c$, where $n$ is the number of samples and $c$ is the number of classes}

\State $\mathsf{penalty} \gets \log(1/c)$

\State $\mathsf{payoff} \gets \mathsf{Penalizing}(q, y, \mathsf{penalty})$

\State $\mathsf{ll} \gets \mathsf{mean}(\mathsf{sum}(y \cdot \log(q)))$

\State \Return $\mathsf{mean}(\mathsf{ll} - \mathsf{payoff})$

\end{algorithmic}
\end{algorithm}

\section{Experimental results}
\label{sec:5}
\subsection{Datasets}
Scoring rules are commonly used in the field of spatio-temporal statistics to compare different models \cite{heaton2019case}. To assess the effectiveness of our proposed method, we have selected several spatio-temporal datasets. Classifying spatio-temporal data is generally challenging, as evidenced by the low accuracy scores achieved on these datasets. For this reason, thorough model evaluation takes on greater importance.
Additionally, it is crucial to consider incorrect classes when the classifier's ability to generalize is limited. While the probabilities related to incorrect classes may not be particularly informative if the probability of the true class for all observations is above 0.5, it becomes increasingly important when the probability of the true class is below 0.5 for many observations. As a result, any evaluation method must take into account incorrect classes when making decisions.

The proposed model was evaluated using various classification problems, as described below. The datasets used for evaluation included three-axis accelerometer signals obtained from participants' activities such as running or lying, as described in \cite{casale2012personalization,weiss2019smartphone,torres2013sensor,kaluvza2010agent}. While these datasets were intended for research purposes related to activity recognition, they also presented challenges for identifying individuals based on their motion patterns.
Also, the dataset presented in \cite{scalabrini2019prediction} was generated for predicting motor failure time using three-axis accelerometer data. 
The dataset in \cite{salam2018comparison} provided information about power consumption such as temperature, humidity, wind speed, consumption, general diffuse flows, and diffuse flows in three zones of Tetouan City, which is suitable for predicting a zone based on consumption information.
The dataset presented in \cite{zhang2017cautionary} includes hourly air pollutant data such as PM2.5, PM10, SO2, NO2, CO, O3, temperature, pressure, dew point temperature, precipitation, wind direction, wind speed from 12 air-quality monitoring sites, and the proposed model predicted the sites based on their air-quality information. The location of participants was collected using two received signal strength indicator (RSSI) measurements, as described in \cite{Dua:2019}. While the primary task was indoor localization, the proposed model aimed to identify participants based on their location.
Finally, the dataset presented in \cite{eftekhari2018hybrid} included three-axis accelerometer and three-axis gyroscope data from ten drivers, and the goal was to identify drivers based on their driving behaviors.

\begin{figure}[http]
	\centering
	\includegraphics[width=1\textwidth]{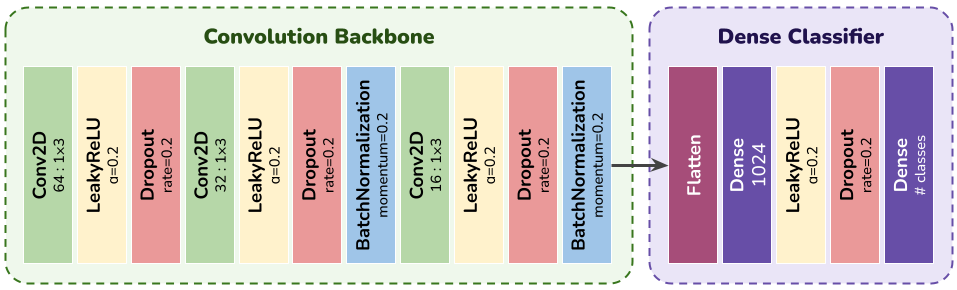}
	\caption{The architecture of the CNN classifier.}
	\label{fig:CNN}
\end{figure}

\subsection{Evaluation Strategy}
Since the purpose of this evaluation is to analyze two proposed metrics, the classification model used is a Convolutional Neural Network (CNN).
CNNs are widely used for spatial and temporal data modeling tasks \cite{wang2020deep}.
The Fig. \ref{fig:CNN} illustrates the architecture of the proposed CNN model.
As neural networks are optimized through iterative training procedures, the proposed metrics can be systematically tested during this process.
Model checkpointing saves model weights periodically during training. This allows rolling back to previous checkpoints if overfitting or divergence occurs. Early stopping monitors validation performance and stops training if no improvement is seen for a set number of epochs, preventing overfitting.
Implementing these techniques with the proposed metrics provides insights into their behavior at different stages of neural network optimization.

\begin{algorithm}
    \caption{h-Block Cross-Validation}
    \label{alg:h-block-cv}
    \begin{algorithmic}[1]
        \Procedure{h-BlockCrossValidation}{$data$, $h$}
            \State Divide $data$ into $h$ non-overlapping blocks: $B=\{B_1, B_2, \ldots, B_h\}$
            \State Validation Size $nv \gets \left \lfloor 0.2h \right \rfloor$
            \State Test Size $nt \gets \left \lfloor 0.3h \right \rfloor$
            \For{$i = 1$ to $h$}
            		\State Validation block $B_v \gets B[i:i+nv]$
            		\State Test block $B_{ts} \gets B[i+nv:i+nv+nt]$
                \State Train block $B_{tr} \leftarrow B \setminus (B_v \cup B_{ts})$
                \State Segmenting blocks by sliding window
                \State Train and validate model on $B_{tr}$ and $B_{v}$
                \State Evaluate model on $T_{ts}$
            \EndFor
        \EndProcedure
    \end{algorithmic}
\end{algorithm}

When dealing with spatio-temporal data, it is essential to employ a cross-validation technique that accounts for the temporal dependencies within the data. This is crucial because the data exhibits interdependence, and employing a random sample selection for training and testing could introduce biases in the results \cite{shao1993linear}.
$h$-block cross-validation is a type of temporal cross-validation that is often used for temporal data \cite{shao1993linear}.
It helps address the issue of data leakage that can occur in standard $k$-fold cross-validation when applied to time series. The Algorithm \ref{alg:h-block-cv} shows the pseudocode of $h$-block CV. The time series data was first divided into $h$ non-overlapping blocks, each representing a continuous time interval.
For each block $i$, a validation set $B_v$ is created by selecting a contiguous subset of $nv$ blocks starting from block $i$. A test set $B_{ts}$ is then created by selecting a contiguous subset of $nt$ blocks starting from the end of the validation block. The remaining blocks are assigned to the training set $B_{tr}$.
Within each block, windowing techniques were applied to segment data.
The model is trained on the segments of the training set $B_{tr}$ and validated on the segments of the validation set $B_v$. The trained model is then evaluated on the segments of the test set $B_{ts}$.
For each iteration, 50\% of the subsets are allocated for training, 30\% for testing, and 20\% for validation. By adopting this approach, the model is trained on a sufficiently large dataset that enables effective generalization to new data, while the testing phase utilizes an independent dataset.

The validation set helps guide hyperparameter tuning, such as selecting the optimal window length and overlap values of the sliding window evaluated via grid search.
Table \ref{tbl:datasets} summarizes the key attributes of each dataset used in the experiments.
It also allows assessing model checkpointing and early stopping criteria by tracking validation performance over training epochs. Once hyperparameters are fixed, the final model performance is reported on the independent test sets from each fold. This rigorous evaluation process provides robust estimates of the proposed metrics.
The CNN model is implemented using \textit{Python} and the \textit{TensorFlow} library. \textit{Nadam} is employed to optimize network parameters. The results of the experiments can be accessed via \href{https://github.com/Ruhallah93/superior-scoring-rules}{GitHub}.

\begin{table}[http]
\scriptsize
\centering
\renewcommand{\arraystretch}{1.2}
\caption{Datasets were used in evaluating the proposed method.}
\label{tbl:datasets}

\begin{tabular}{|c|c|c|c|}
\hline
\textbf{Dataset}                 & \textbf{Number of Classes} & \textbf{Window Length} & \textbf{Window Overlap} \\ \hline
\cite{casale2012personalization} & 13           &00:00:03&              75\%  \\ \hline
\cite{weiss2019smartphone}       & 10           &00:00:08&              75\%  \\ \hline
\cite{torres2013sensor}          & 12           &00:00:06&              75\%  \\ \hline
\cite{kaluvza2010agent}          & 5            &00:00:03&              75\%  \\ \hline
\cite{scalabrini2019prediction}  & 3            &00:00:06&              75\%  \\ \hline
\cite{salam2018comparison}       & 3            &00:07:00&              75\%  \\ \hline
\cite{zhang2017cautionary}       & 12           &00:01:00&              75\%  \\ \hline
\cite{Dua:2019}                  & 12           &00:00:30&              75\%  \\ \hline
\cite{eftekhari2018hybrid}       & 10           &00:00:10&              75\%  \\ \hline
\end{tabular}

\end{table}

\subsection{Details of Experiments}
To ensure a thorough evaluation of the proposed evaluation metrics, the model training incorporated model checkpointing (\textit{CP}) and early stopping (\textit{ES}) techniques, considering both traditional evaluation metrics and the proposed metrics. \textit{CP} were saved during training at epochs where either the traditional metrics (such as Brier Score and Logarithmic Loss) or the proposed metrics demonstrated the best performance on the validation set. \textit{ES} was implemented as well, terminating training if there was no improvement in either the traditional metrics or the proposed metrics for a specified number of epochs, thus preventing overfitting. To enhance the robustness of the results, this entire process was repeated $100$ times to identify the optimal model hyperparameters. Such an approach enabled a fair comparison between the proposed metrics and the sole use of traditional metrics in determining the best model checkpoint. Subsequently, the checkpoint yielding the highest performance, as indicated by each metric, was evaluated on the test set, providing an accurate assessment of the proposed metric's ability to identify the model with the highest true performance.

The F1 score and accuracy are widely recognized as a prominent evaluation metric for classification tasks.
To effectively evaluate model performance on such unbalanced datasets, accuracy alone is not a suitable metric \cite{grina2023re}. 
As most of the real-world datasets are either heavily or moderately imbalanced, accuracy can be misleading when the number of samples is very different between classes \cite{moral2022using}.
For this reason, the macro-averaged F1 score is also reported. F1 score considers both precision and recall, providing a better sense of classification effectiveness on unbalanced problems \cite{hossin2015review}.
However, it is important to note that this metric solely serves as a proper scoring rule and does not encompass the measurement of forecast uncertainty and overconfidence \cite{guo2017calibration}. In order to address this limitation, it is proposed that \textit{strictly proper scoring rules} be employed, as they possess the capability to gauge uncertainty. By exhibiting behavior similar to the F1 score while also measuring uncertainty, these proposed scoring rules can effectively fulfill both reliability and accuracy requirements. Therefore, incorporating such scoring rules in the evaluation process would provide a comprehensive assessment of classification models, encompassing not only their predictive accuracy but also their ability to quantify uncertainty.

\begin{figure}[http]
\centering     
\subfloat[\cite{eftekhari2018hybrid}]{\label{fig:a}\includegraphics[width=0.33\textwidth]{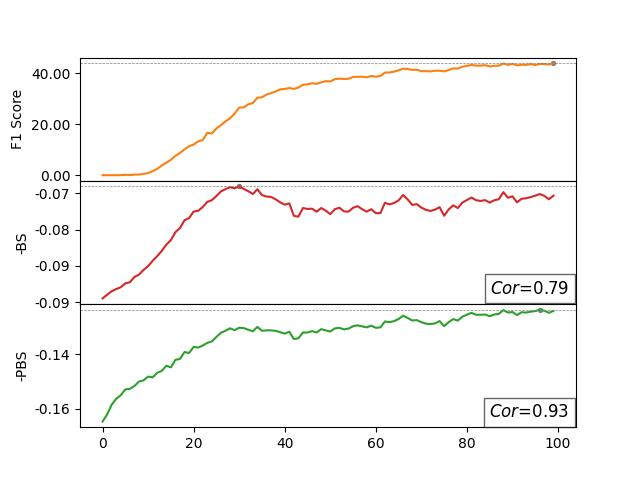}}
\subfloat[\cite{kaluvza2010agent}]{\label{fig:b}\includegraphics[width=0.33\textwidth]{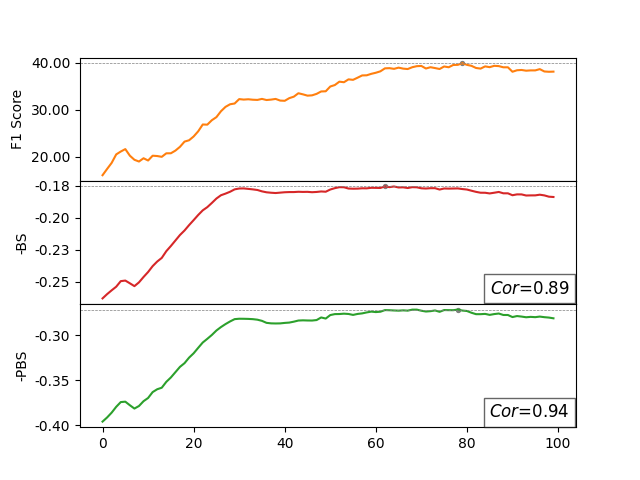}}
\subfloat[\cite{torres2013sensor}]{\label{fig:b}\includegraphics[width=0.33\textwidth]{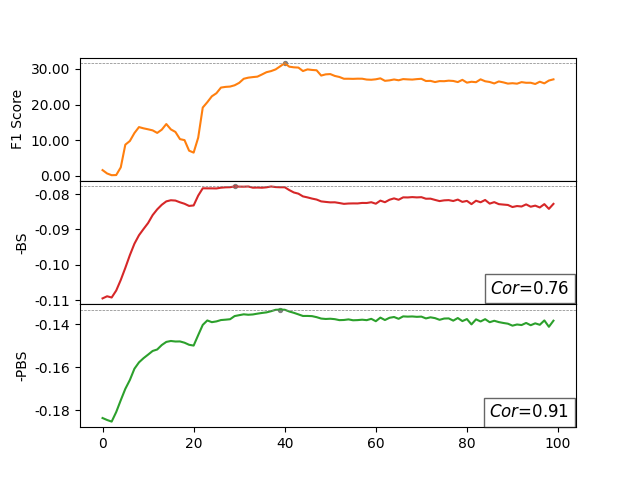}}
\\
\subfloat[\cite{scalabrini2019prediction}]{\label{fig:a}\includegraphics[width=0.33\textwidth]{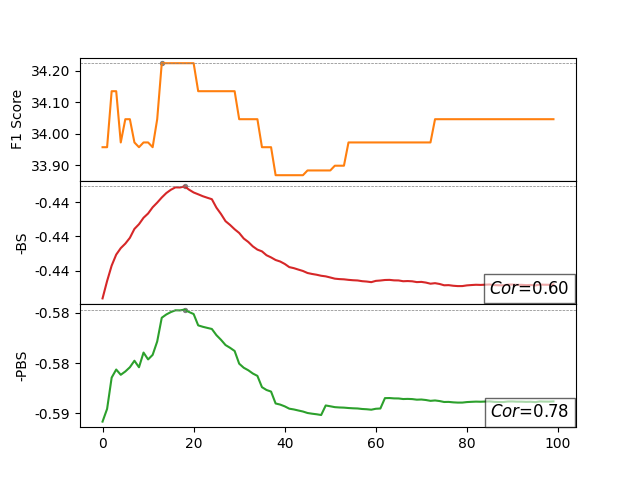}}
\subfloat[\cite{salam2018comparison}]{\label{fig:b}\includegraphics[width=0.33\textwidth]{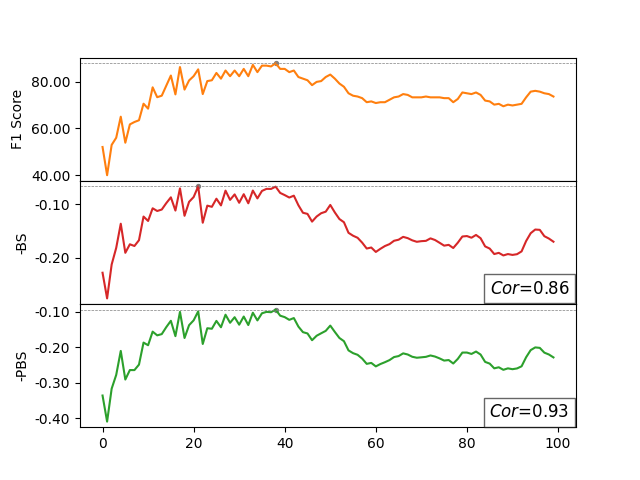}}
\subfloat[\cite{zhang2017cautionary}]{\label{fig:b}\includegraphics[width=0.33\textwidth]{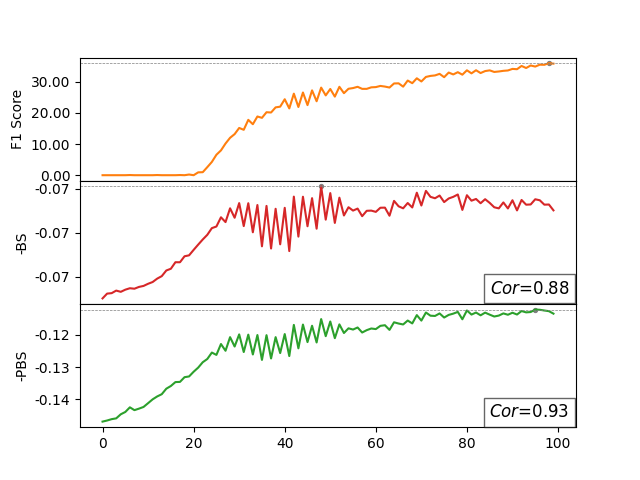}}
\\
\subfloat[\cite{Dua:2019}]{\label{fig:a}\includegraphics[width=0.33\textwidth]{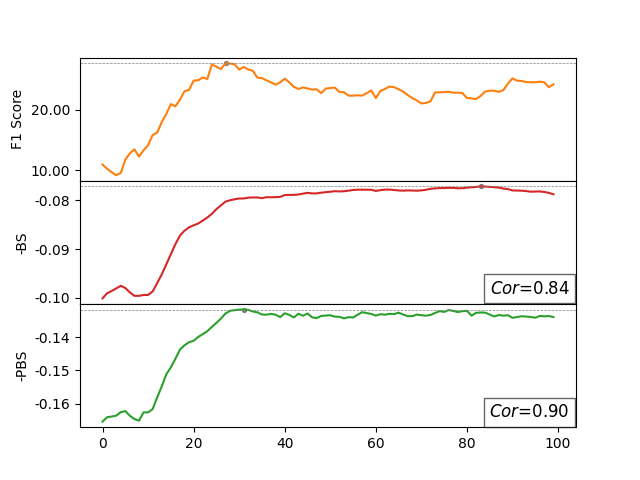}}
\subfloat[\cite{casale2012personalization}]{\label{fig:b}\includegraphics[width=0.33\textwidth]{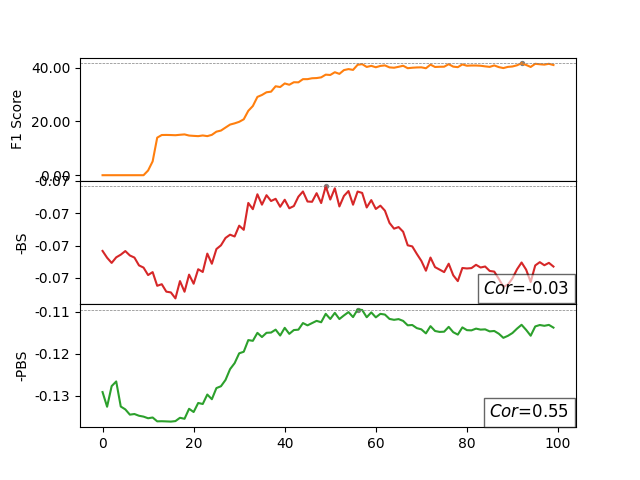}}
\subfloat[\cite{weiss2019smartphone}]{\label{fig:b}\includegraphics[width=0.33\textwidth]{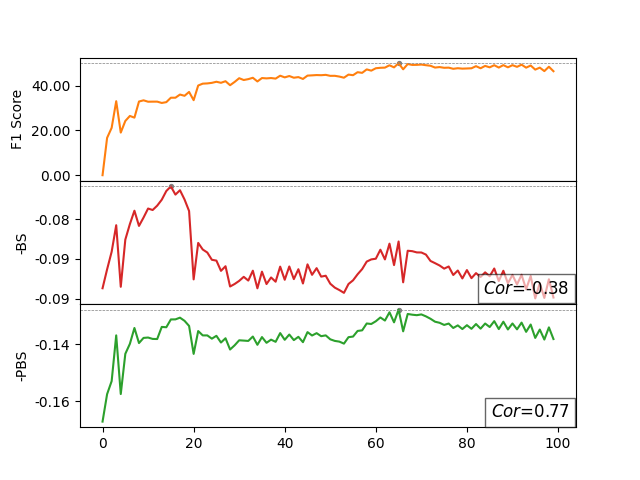}}
\caption{This figure presents an illustrative example of validation data statistics, based on epochs, for each dataset.
The orange lines show the F1 score at each epoch.
The red lines show the flipped Brier Score ($BS \times -1$).
The green lines are the flipped Penalized Brier Score ($PBS \times -1$).
Finally, the symbol $Cor$ represents the Pearson correlation between the F1 score and the corresponding scoring rule.}
\label{fig:examples}
\end{figure}

\subsection{Discussion}
Fig. \ref{fig:examples} presents validation statistics over training epochs for each dataset, including the macro-averaged F1 score, Brier Score, and proposed Penalized Brier Score. This provides an illustrative example of how the different scoring rules change during model optimization.
The Pearson correlation ($Cor$) between each scoring rule and the F1 score is also shown.
When plotting and comparing the different validation metric trends together, it's important to note that they are oriented in different directions.
Specifically, the F1 score increases positively as the model improves, while the \textit{PLL} and \textit{PBS} decrease negatively as performance increases. Therefore, to enable the results to be interpreted more easily, the \textit{PBS} values graphed have been multiplied by -1 to match the positive F1 score trend.

Importantly, the figure also marks the optimal point on each trend with a data point symbol. This optimal point corresponds to the epoch at which the metric reached its highest validation value. By pinpointing these peaks, we can see exactly where each scoring rule reached its best performance relative to the others.
Notably, the \textit{PBS} trends exhibit optimal points that are closer to the F1 score points compared to the standard Brier Score.
This observation suggests that, in these specific examples, the \textit{PBS} was more suitable for \textit{model checkpointing}.
Additionally, the slopes of the \textit{PBS} trends display greater similarity to the F1 score trends versus the standard Brier Score slopes.
Additionally, the negative correlation metric confirms the \textit{PBS} consistently maintains a stronger relationship with the F1 score than the standard Brier Score.
Therefore, given these observations, implementing \textit{early stopping} based on the \textit{PBS} rather than the Brier Score has the potential to yield models with improved test performance.

\subsection{Benchmark}
The previous discussion examining superior scoring rule trends during training provided initial insights into how the proposed metrics may help optimize model performance.
First, the optimal points of the \textit{PBS} trends were closer to those of the F1 score trends compared to the standard Brier Score. Additionally, the negative correlation between the \textit{PBS} and F1 score was consistently higher. Given this observation, it was expected that employing the \textit{PBS} for model selection could lead to improved performance. To decisively validate their effectiveness, rigorous quantitative evaluation was carried out using multiple experiments.

The first stage involved assessing the correlation between the metrics and the F1 score on validation data using k-fold cross-validation. To this end, the performance of models selected with different scoring rules using both model checkpointing (\textit{CP}) and early stopping (\textit{ES}) are evaluated.
As shown in Table \ref{tbl:correlations}, Pearson correlation coefficients were calculated between each of the scoring rules and the macro-averaged F1 scores across datasets and folds. In this table, the symbol $Cr_x$ represents the correlation between the F1 score and the scoring rule $x$. Strong positive correlation within the $-1$ to $1$ range indicates close agreement between trends.
The results demonstrate that \textit{PBS} and \textit{PLL} trends exhibited the highest degree of similarity to F1 score variations. With a correlation exceeding other scores, \textit{PBS} and \textit{PLL} can be reliably used to track changes in model performance.

Next, the ability of the proposed \textit{superior} scoring rules to select high-scoring models was evaluated through k-fold cross-validation.
Table \ref{tbl:benchmark} compares the macro-averaged F1 scores of the optimized classifier chosen by each metric on test data. These F1 scores were determined through \textit{ES} or \textit{CP}, which relied on the utilization of a superior scoring rule. The symbol $F1_x$ denotes the F1 score achieved when employing the scoring rule $x$. As depicted in the table, it is evident that the F1 score consistently performs better when the model is selected using each of the proposed \textit{superior} scoring rules. This observation emphasizes the effectiveness of the \textit{superior} scoring rules in identifying models that yield higher F1 scores. Furthermore, consistently better scores emerged when \textit{PBS} guided selection rather than \textit{PLL}.

Consequently, these findings underscore the importance of employing appropriate \textit{superior} scoring rules to circumvent shortcomings of F1 score alone in model selection, ultimately enabling improved classification capability and more trustworthy predictions. Collectively, the correlational and benchmark results provide compelling quantitative evidence that proposed penalties within strictly proper scoring rules augment their capacity to reflect true performance changes, in addition to facilitating the identification of models with stronger predictive power for new samples. Therefore, by more faithfully reflecting F1 score behavior, the proposed criteria enhance optimal model evaluation, selection, and classification for challenging spatio-temporal applications.

\begin{table}[http]
\centering
\scriptsize
\caption{The table presents a comparison based on the Pearson correlation between F1 scores and the scoring rules using validation data. The abbreviations \textit{ES} and \textit{CP} represent Early Stoping and Model Checkpointing, respectively. The symbol $Cr_x$ denotes the correlation between F1 score and the scoring rule of $x$.}
\label{tbl:correlations}
\begin{tabular}{|c|c|c|c|c|c|c|c|c|}
\hline
Data                                            & ES & CP & $Cr_{BS}$         & $Cr_{PBS}$        & $\Delta$                      & $Cr_{LL}$         & $Cr_{PLL}$        & $\Delta$                      \\ \hline
                                                   & \checkmark    &            & 0.957 ($\pm$0.02) & 0.969 ($\pm$0.01) & \cellcolor[HTML]{9AFF99}0.012 & 0.837 ($\pm$0.08) & 0.900 ($\pm$0.06) & \cellcolor[HTML]{9AFF99}0.063 \\ \cline{2-9} 
\multirow{-2}{*}{\cite{casale2012personalization}} &               & \checkmark & 0.964 ($\pm$0.01) & 0.980 ($\pm$0.01) & \cellcolor[HTML]{9AFF99}0.016 & 0.526 ($\pm$0.47) & 0.640 ($\pm$0.30) & \cellcolor[HTML]{9AFF99}0.113 \\ \hline
                                                   & \checkmark    &            & 0.963 ($\pm$0.04) & 0.983 ($\pm$0.02) & \cellcolor[HTML]{9AFF99}0.020 & 0.764 ($\pm$0.17) & 0.845 ($\pm$0.11) & \cellcolor[HTML]{9AFF99}0.081 \\ \cline{2-9} 
\multirow{-2}{*}{\cite{weiss2019smartphone}}       &               & \checkmark & 0.699 ($\pm$0.46) & 0.926 ($\pm$0.09) & \cellcolor[HTML]{9AFF99}0.228 & 0.520 ($\pm$0.53) & 0.589 ($\pm$0.45) & \cellcolor[HTML]{9AFF99}0.069 \\ \hline
                                                   & \checkmark    &            & 0.721 ($\pm$0.17) & 0.740 ($\pm$0.22) & \cellcolor[HTML]{9AFF99}0.019 & 0.731 ($\pm$0.15) & 0.745 ($\pm$0.14) & \cellcolor[HTML]{9AFF99}0.013 \\ \cline{2-9} 
\multirow{-2}{*}{\cite{torres2013sensor}}          &               & \checkmark & 0.607 ($\pm$0.49) & 0.748 ($\pm$0.45) & \cellcolor[HTML]{9AFF99}0.141 & 0.306 ($\pm$0.53) & 0.508 ($\pm$0.45) & \cellcolor[HTML]{9AFF99}0.202 \\ \hline
                                                   & \checkmark    &            & 0.690 ($\pm$0.73) & 0.748 ($\pm$0.45) & \cellcolor[HTML]{9AFF99}0.058 & 0.567 ($\pm$0.53) & 0.619 ($\pm$0.49) & \cellcolor[HTML]{9AFF99}0.052 \\ \cline{2-9} 
\multirow{-2}{*}{\cite{kaluvza2010agent}}          &               & \checkmark & 0.667 ($\pm$1.00) & 0.674 ($\pm$0.72) & \cellcolor[HTML]{9AFF99}0.007 & 0.317 ($\pm$0.65) & 0.378 ($\pm$0.65) & \cellcolor[HTML]{9AFF99}0.061 \\ \hline
                                                   & \checkmark    &            & 0.717 ($\pm$0.61) & 0.728 ($\pm$0.61) & \cellcolor[HTML]{9AFF99}0.010 & 0.275 ($\pm$0.70) & 0.292 ($\pm$0.71) & \cellcolor[HTML]{9AFF99}0.016 \\ \cline{2-9} 
\multirow{-2}{*}{\cite{scalabrini2019prediction}}  &               & \checkmark & 0.739 ($\pm$0.63) & 0.740 ($\pm$0.63) & \cellcolor[HTML]{9AFF99}0.001 & 0.384 ($\pm$0.60) & 0.425 ($\pm$0.55) & \cellcolor[HTML]{9AFF99}0.041 \\ \hline
                                                   & \checkmark    &            & 0.965 ($\pm$0.08) & 0.987 ($\pm$0.03) & \cellcolor[HTML]{9AFF99}0.022 & 0.592 ($\pm$0.16) & 0.664 ($\pm$0.13) & \cellcolor[HTML]{9AFF99}0.072 \\ \cline{2-9} 
\multirow{-2}{*}{\cite{salam2018comparison}}       &               & \checkmark & 0.995 ($\pm$0.01) & 0.997 ($\pm$0.01) & \cellcolor[HTML]{9AFF99}0.002 & 0.419 ($\pm$0.65) & 0.446 ($\pm$0.66) & \cellcolor[HTML]{9AFF99}0.026 \\ \hline
                                                   & \checkmark    &            & 0.680 ($\pm$0.31) & 0.899 ($\pm$0.05) & \cellcolor[HTML]{9AFF99}0.219 & 0.595 ($\pm$0.48) & 0.777 ($\pm$0.25) & \cellcolor[HTML]{9AFF99}0.182 \\ \cline{2-9} 
\multirow{-2}{*}{\cite{zhang2017cautionary}}       &               & \checkmark & 0.665 ($\pm$0.30) & 0.813 ($\pm$0.13) & \cellcolor[HTML]{9AFF99}0.148 & 0.378 ($\pm$0.71) & 0.802 ($\pm$0.14) & \cellcolor[HTML]{9AFF99}0.424 \\ \hline
                                                   & \checkmark    &            & 0.572 ($\pm$0.48) & 0.597 ($\pm$0.27) & \cellcolor[HTML]{9AFF99}0.025 & 0.694 ($\pm$0.23) & 0.724 ($\pm$0.21) & \cellcolor[HTML]{9AFF99}0.031 \\ \cline{2-9} 
\multirow{-2}{*}{\cite{Dua:2019}}                  &               & \checkmark & 0.444 ($\pm$0.48) & 0.704 ($\pm$0.21) & \cellcolor[HTML]{9AFF99}0.260 & 0.434 ($\pm$0.61) & 0.498 ($\pm$0.53) & \cellcolor[HTML]{9AFF99}0.064 \\ \hline
                                                   & \checkmark    &            & 0.874 ($\pm$0.14) & 0.924 ($\pm$0.07) & \cellcolor[HTML]{9AFF99}0.050 & 0.450 ($\pm$0.38) & 0.553 ($\pm$0.32) & \cellcolor[HTML]{9AFF99}0.103 \\ \cline{2-9} 
\multirow{-2}{*}{\cite{eftekhari2018hybrid}}       &               & \checkmark & 0.916 ($\pm$0.12) & 0.953 ($\pm$0.06) & \cellcolor[HTML]{9AFF99}0.037 & 0.391 ($\pm$0.52) & 0.629 ($\pm$0.32) & \cellcolor[HTML]{9AFF99}0.239 \\ \hline
\end{tabular}
\end{table}

\begin{table}[http]
\centering
\scriptsize
\caption{This table provides a comparative analysis of scoring rules using F1 scores obtained from test data, which were determined through model selection based on a scoring rule. The abbreviations \textit{ES} and \textit{CP} represent Early Stoping and Model Checkpointing, respectively. Additionally, the symbol $F1_x$ represents the F1 score achieved through the utilization of the scoring rule of $x$.}
\label{tbl:benchmark}
\begin{tabular}{|c|c|c|c|c|c|c|c|c|}
\hline
Data                                            & ES         & CP         & $F1_{BS}$         & $F1_{PBS}$        & $\Delta$                     & $F1_{LL}$         & $F1_{PLL}$        & $\Delta$                     \\ \hline
                                                   & \checkmark &            & 45.00 ($\pm$0.05) & 51.65 ($\pm$0.07) & \cellcolor[HTML]{9AFF99}6.65 & 64.76 ($\pm$0.09) & 65.85 ($\pm$0.08) & \cellcolor[HTML]{9AFF99}1.10 \\ \cline{2-9} 
\multirow{-2}{*}{\cite{casale2012personalization}} &            & \checkmark & 55.47 ($\pm$0.05) & 60.03 ($\pm$0.07) & \cellcolor[HTML]{9AFF99}4.56 & 70.00 ($\pm$0.08) & 71.82 ($\pm$0.07) & \cellcolor[HTML]{9AFF99}1.83 \\ \hline
                                                   & \checkmark &            & 53.01 ($\pm$0.06) & 57.60 ($\pm$0.04) & \cellcolor[HTML]{9AFF99}4.59 & 53.90 ($\pm$0.08) & 55.48 ($\pm$0.05) & \cellcolor[HTML]{9AFF99}1.59 \\ \cline{2-9} 
\multirow{-2}{*}{\cite{weiss2019smartphone}}       &            & \checkmark & 55.37 ($\pm$0.04) & 58.03 ($\pm$0.05) & \cellcolor[HTML]{9AFF99}2.66 & 55.74 ($\pm$0.06) & 57.63 ($\pm$0.07) & \cellcolor[HTML]{9AFF99}1.89 \\ \hline
                                                   & \checkmark &            & 28.24 ($\pm$0.07) & 32.48 ($\pm$0.09) & \cellcolor[HTML]{9AFF99}4.24 & 30.08 ($\pm$0.04) & 30.62 ($\pm$0.04) & \cellcolor[HTML]{9AFF99}0.53 \\ \cline{2-9} 
\multirow{-2}{*}{\cite{torres2013sensor}}          &            & \checkmark & 32.30 ($\pm$0.08) & 34.28 ($\pm$0.09) & \cellcolor[HTML]{9AFF99}1.99 & 31.65 ($\pm$0.08) & 31.80 ($\pm$0.05) & \cellcolor[HTML]{9AFF99}0.15 \\ \hline
                                                   & \checkmark &            & 58.27 ($\pm$0.20) & 59.14 ($\pm$0.19) & \cellcolor[HTML]{9AFF99}0.87 & 56.79 ($\pm$0.14) & 59.55 ($\pm$0.13) & \cellcolor[HTML]{9AFF99}2.76 \\ \cline{2-9} 
\multirow{-2}{*}{\cite{kaluvza2010agent}}          &            & \checkmark & 66.51 ($\pm$0.06) & 67.49 ($\pm$0.05) & \cellcolor[HTML]{9AFF99}0.97 & 68.21 ($\pm$0.11) & 69.78 ($\pm$0.08) & \cellcolor[HTML]{9AFF99}1.57 \\ \hline
                                                   & \checkmark &            & 43.66 ($\pm$0.34) & 50.81 ($\pm$0.32) & \cellcolor[HTML]{9AFF99}7.14 & 51.12 ($\pm$0.32) & 59.69 ($\pm$0.26) & \cellcolor[HTML]{9AFF99}8.57 \\ \cline{2-9} 
\multirow{-2}{*}{\cite{scalabrini2019prediction}}  &            & \checkmark & 48.00 ($\pm$0.09) & 54.83 ($\pm$0.16) & \cellcolor[HTML]{9AFF99}6.83 & 64.12 ($\pm$0.18) & 66.82 ($\pm$0.22) & \cellcolor[HTML]{9AFF99}2.71 \\ \hline
                                                   & \checkmark &            & 80.93 ($\pm$0.08) & 83.20 ($\pm$0.07) & \cellcolor[HTML]{9AFF99}2.27 & 77.87 ($\pm$0.06) & 80.13 ($\pm$0.08) & \cellcolor[HTML]{9AFF99}2.26 \\ \cline{2-9} 
\multirow{-2}{*}{\cite{salam2018comparison}}       &            & \checkmark & 82.08 ($\pm$0.08) & 83.88 ($\pm$0.07) & \cellcolor[HTML]{9AFF99}1.80 & 80.22 ($\pm$0.06) & 83.42 ($\pm$0.06) & \cellcolor[HTML]{9AFF99}3.20 \\ \hline
                                                   & \checkmark &            & 50.13 ($\pm$0.09) & 53.51 ($\pm$0.07) & \cellcolor[HTML]{9AFF99}3.38 & 55.59 ($\pm$0.08) & 56.33 ($\pm$0.08) & \cellcolor[HTML]{9AFF99}0.74 \\ \cline{2-9} 
\multirow{-2}{*}{\cite{zhang2017cautionary}}       &            & \checkmark & 52.04 ($\pm$0.11) & 53.63 ($\pm$0.10) & \cellcolor[HTML]{9AFF99}1.59 & 51.73 ($\pm$0.07) & 52.39 ($\pm$0.07) & \cellcolor[HTML]{9AFF99}0.66 \\ \hline
                                                   & \checkmark &            & 27.09 ($\pm$0.02) & 27.77 ($\pm$0.03) & \cellcolor[HTML]{9AFF99}0.69 & 23.10 ($\pm$0.04) & 23.75 ($\pm$0.02) & \cellcolor[HTML]{9AFF99}0.65 \\ \cline{2-9} 
\multirow{-2}{*}{\cite{Dua:2019}}                  &            & \checkmark & 26.51 ($\pm$0.05) & 29.37 ($\pm$0.04) & \cellcolor[HTML]{9AFF99}2.86 & 26.77 ($\pm$0.04) & 27.39 ($\pm$0.05) & \cellcolor[HTML]{9AFF99}0.62 \\ \hline
                                                   & \checkmark &            & 66.50 ($\pm$0.03) & 66.53 ($\pm$0.03) & \cellcolor[HTML]{9AFF99}0.03 & 65.39 ($\pm$0.06) & 65.81 ($\pm$0.05) & \cellcolor[HTML]{9AFF99}0.43 \\ \cline{2-9} 
\multirow{-2}{*}{\cite{eftekhari2018hybrid}}       &            & \checkmark & 67.85 ($\pm$0.03) & 68.36 ($\pm$0.02) & \cellcolor[HTML]{9AFF99}0.51 & 66.64 ($\pm$0.04) & 67.07 ($\pm$0.05) & \cellcolor[HTML]{9AFF99}0.43 \\ \hline
\end{tabular}
\end{table}

\section{Conclusion}
\label{sec:6}
This study introduced novel \textit{superior} scoring rules called Penalized Brier Score (\textit{PBS}) and Penalized Logarithmic Loss (\textit{PLL}) for evaluating probabilistic classification models. \textit{PBS} and \textit{PLL} modify the traditional Brier Score and Logarithmic Loss by integrating a penalty term for misclassified observations. As demonstrated formally, \textit{PBS} and \textit{PLL} satisfy the properties of strictly proper scoring rules while also consistently assigning superior scores to correctly classified observations. The experimental evaluation highlighted the benefits of using \textit{PBS} and \textit{PLL} for model checkpointing and early stopping. \textit{PBS} and \textit{PLL} demonstrated a higher negative correlation with the F1 score compared to traditional Brier Score and Logarithmic Loss during model training. Consequently, the proposed scoring functions were more effective in identifying optimal model checkpoints and determining early stopping points, leading to improved F1 scores. The test results substantiated that model selection based on \textit{PBS} and \textit{PLL} yielded superior F1 scores compared to traditional metrics.
In conclusion, \textit{PBS} and \textit{PLL} enable more accurate model evaluation by encapsulating both proper scoring rule principles and preferential treatment of correct classifications. The proposed metrics address a critical gap between probabilistic uncertainty assessment and deterministic accuracy maximization. By accounting for uncertainty and the value of true classifications, \textit{PBS} and \textit{PLL} can enhance model selection, checkpointing, and early stopping in classification tasks requiring reliable predictive uncertainty. Further research can explore \textit{PBS} and \textit{PLL} with different model architectures and classification problems. Also, various penalties can be investigated to obtain better performance.


\section*{Appendix}

\section{Proof of Theorem \ref{thrm:LL_Property}}
\label{proof:LL_Property}
Let $x$ be a member of the set $\psi$ with $x_i = \alpha$, where $i$ is the index of the true class, and let $y$ be the ground-truth label vector.
Due to the single-label classification property, the Eq. (\ref{eq:logarithmic_loss}) can be expressed as:
\begin{align}
\label{eq:LL_single_label_form}
S_{LL}(x,i) = -\sum_{k=1}^{c} y_k~log(x_k)=-log(x_i)
\end{align}
Now, let $q \in \xi$ be another prediction. There are three possible cases for the true class probability $q_i$:
\begin{itemize}
\item $q_i=\alpha$.
It means that $S_{LL}(x,i) = S_{LL}(q,i)$.

\item $q_i=\alpha-\beta$, where $\beta > 0$.
Since $\alpha > \alpha - \beta$ and $-log(q_i)$ is monotonically descending and positive in $[0,1]$, it follows that $S_{LL}(q,i) > S_{LL}(x,i)$.

\item $q_i=\alpha+\beta$, where $\beta > 0$.
Since $\alpha + \beta > \alpha$ and $-log(q_i)$ is monotonically descending and positive in $[0,1]$, it follows that $S_{LL}(x,i) > S_{LL}(q,i)$.
\end{itemize}
Therefore, when $q_i \geq x_i$, the Logarithmic Loss does not assign a strictly higher score to the prediction $q$ compared to the true prediction $x$.
Hence, it cannot be considered \textit{superior}.

\section{Proof of Theorem \ref{thrm:BS_Property}}
\label{proof:BS_Property}
Let $x \in \psi$ such that $x_i=\alpha$, where $i$ is the index of the true class.
Let $y$ be the ground-truth label vector.
Due to the single-label classification property, the Logarithmic Loss can be expressed as:
Due to the single-label classification property, Eq. (\ref{eq:brier_score}) can be written as:
\begin{align}
S_{BS}(x,i) = \sum_{k=1,k\neq  i}^{c} x_k^2 + (1-x_i)^2
\end{align}
In the following, the term "hot value" is used to refer to the element of the probability vectors that correspond to the truth class, whereas the other elements are referred to as "non-hot values".
Now, let $q \in \xi$. There are three possible cases for the true class probability $q_i$:
\subsection{$q_i=\alpha$}
It is possible to demonstrate that the variance of the non-hot part has the greatest impact on the score value of the Brier score function.
The Brier Score for $q$ is given by the following expression:
\begin{align}
S_{BS}(q,i) = 
\underbrace{ \sum_{k=1,k\neq  i}^{c} (q_k)^2 }_{non-hot~part} + 
\underbrace{(1-q_i)^2}_{hot~part}
\end{align}
where the index of $i$ represents the true class or $i=\arg\max\:y$.
The non-hot part can be expanded in the following manner:
\begin{align}
& \sum_{k=1,k\neq  i}^{c} (q_k)^2 
= \sum_{k=1,k\neq  i}^{c} (q_k-\tilde{q}+\tilde{q})^2 = 
\\
& \sum_{k=1,k\neq  i}^{c} \left ( (q_k-\tilde{q})^2 + 2\tilde{q}(q_k-\tilde{q}) + \tilde{q}^2 \right ) = 
\\
& \frac{c}{c} \sum_{k=1,k\neq  i}^{c} \left ( (q_k-\tilde{q})^2 + 2\tilde{q}(q_k-\tilde{q}) + \tilde{q}^2 \right )  = 
\\
& \frac{c}{c} \sum_{k=1,k\neq  i}^{c} (q_k-\tilde{q})^2 + \frac{2c\tilde{q}}{c} \sum_{k=1,k\neq  i}^{c} \left ( (q_k)-\tilde{q} \right ) + c \tilde{q}^2= 
\\
& \frac{c}{c} \sum_{k=1,k\neq  i}^{c} (q_k-\tilde{q})^2 + \frac{2c\tilde{q}}{c} \sum_{k=1,k\neq  i}^{c} (q_k)-2c\tilde{q}^2 + c\tilde{q}^2 = 
\\
& \frac{c}{c} \sum_{k=1,k\neq  i}^{c} (q_k-\tilde{q})^2 + 2c\tilde{q}^2 - 2c\tilde{q}^2 + c\tilde{q}^2 = 
\\
& \frac{c}{c} \sum_{k=1,k\neq  i}^{c} (q_k-\tilde{q})^2 + c\tilde{q}^2
\end{align}
If $\tilde{q}$ is the average of the non-hat part, then:
\begin{align} \label{BS_Non_Hot_Variance}
& \sum_{k=1,k\neq  i}^{c} (q_k)^2 
= c \left ( \underbrace{\frac{1}{c} \sum_{k=1,k\neq  i}^{c} (q_k-\tilde{q})^2}_{non-hot~part~variance}  + \tilde{q}^2 \right )
\end{align}
It is evident that:
\begin{equation}
\label{eq:eq_non_hot_mean}
\sum_{k=1,k\neq  i}^{c} q_k = 1-q_i
\Rightarrow
\tilde{q}=\frac{1-q_i}{c-1}
\end{equation}
Consequently, it can be obtained that:
\begin{equation}
\label{eq:eslvarmean}
S_{BS}(q,i) = 
c \left ( \frac{1}{c} \sum_{k=1,k\neq  i}^{c} (q_k-\tilde{q})^2 + \tilde{q}^2 \right ) + (1-q_i)^2
\end{equation}
where $i=\arg\max\:y$.
Therefore, based on Eq. (\ref{BS_Non_Hot_Variance}), the Brier Score in Eq. (\ref{eq:eslvarmean}) can be minimized by reducing the variance of the non-hot part and maximizing $q_i$.

Furthermore, $q$ may have $t$ non-hot values that exceed $\alpha$, and the sum of these $t$ non-hot values is equal to $t\alpha + \delta$. 
The following expression is utilized to formulate the sum of $c-t$ non-hot values that are less than $\alpha$:
\begin{align}
& \sum_{k=1,k\neq  i}^{c} q_k + \alpha = 1~
\\&\Rightarrow
\sum_{k=1,k\neq  i}^{c} q_k = 1 - \alpha
\\&\Rightarrow
\sum_{k=1,k\neq  i}^{c} q_k - (t\alpha+\delta) + (t\alpha+\delta) = 1 - \alpha
\\&\Rightarrow
\sum_{k=1,k\neq  i}^{c} q_k - (t\alpha+\delta) = 1 - (t+1)\alpha-\delta
\end{align}
This implies that the sum of the non-hot values is equivalent to $(t\alpha+\delta) + (1-(t+1)\alpha-\delta)$.
As $\delta$ increases and approaches $1$, the difference between $t\alpha + \delta$ and $1 - (t+1)\alpha-\delta$ also increases, leading to an increase in the variance of the non-hot part in Eq. (\ref{eq:eslvarmean}).
Therefore, it can be inferred that the variance of the non-hot part of $q$ is greater than $x$.
Let $\tilde{x}$ represent the average of the non-hat part of $x$:
\begin{align}
\left ( \frac{1}{c} \sum_{k=1,k\neq  i}^{c} (x_k-\tilde{x})^2  \right ) <
\left ( \frac{1}{c} \sum_{k=1,k\neq  i}^{c} (q_k-\tilde{q})^2  \right )
\end{align}
Now, as $q_i=x_i=\alpha$ and $\tilde{x}=\tilde{q}=\frac{1-\alpha}{c-1}$, we can proceed as follows:
\begin{align}
&
S_{BS}(x,i)-S_{BS}(q,i) = 
\\&
\left ( \sum_{k=1,k\neq  i}^{c} (x_k-\tilde{x})^2 + c\tilde{x}^2 + (1-x_i)^2 \right )-
\nonumber \\&
\left ( \sum_{k=1,k\neq  i}^{c} (q_k-\tilde{q})^2 + c\tilde{q}^2 + (1-q_i)^2 \right )=
\\&
\sum_{k=1,k\neq  i}^{c} (x_k-\tilde{x})^2 -
\sum_{k=1,k\neq  i}^{c} (q_k-\tilde{q})^2 =
\\&
c \left (\frac{1}{c}\sum_{k=1,k\neq  i}^{c} (x_k-\tilde{x})^2 -
\frac{1}{c}\sum_{k=1,k\neq  i}^{c} (q_k-\tilde{q})^2  \right ) < 0
\end{align}
Thus, the condition $S_{BS}(q,i) > S_{BS}(x,i)$ still holds even if a non-hot value is only slightly greater than $\alpha$. And $S_{BS}$ is not \textit{superior} in this case.

\begin{figure}[http]
	\centering
	\includegraphics[width=0.75\textwidth]{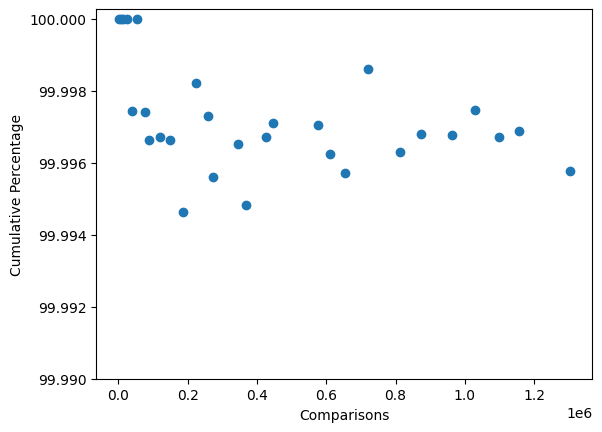}
	\caption{Cumulative percentage of instances where condition $S_{BS}(q,i) > S_{BS}(x,i)$ is met.}
	\label{fig:T4_II_MC}
\end{figure}

\subsection{$q_i=\alpha-\beta$}
To verify the validity of the condition of being \textit{superior}, a Monte Carlo simulation can be conducted, which involves comparing $S_{BS}(q,i)$ to $S_{BS}(x,i)$ for numerous randomly generated values of $x \in \psi$ and $q \in \xi$. The variables in this simulation include $x$, $q$, $\alpha$, $\beta$, and $c$, where $\alpha$ denotes the hot value of $x$, $\beta$ represents the difference between the hot value of $q$ and $\alpha$, and $c$ represents the number of classes. All of these variables are randomly generated from a normal distribution.

For each comparison, a random $x$ is selected from $\psi$ such that $x_i = \alpha$ and $x \in \mathbb{R}^c$, where $\alpha$ represents a hot value and $c$ represents the number of classes. Next, a random $q$ is chosen from $\xi$ such that $q_i = \alpha - \beta$, where $\beta$ is a random positive value. The pair $(x,q)$ is then evaluated to determine whether $S_{BS}(q,i) > S_{BS}(x,i)$ holds.

Figure \ref{fig:T4_II_MC} presents the results of the Monte Carlo simulation for varying numbers of comparisons. The figure demonstrates the convergence of the Monte Carlo method and confirms that the condition $S_{BS}(q,i) > S_{BS}(x,i)$ is satisfied in almost 99.996\% of comparisons. The figure displays the cumulative percentage of comparisons in which the condition $S_{BS}(q,i) > S_{BS}(x,i)$ holds.
When $q_i < \alpha$, the Brier Score typically assigns a higher score to the observation $q$; however, this is not always the case. Therefore, the Brier Score cannot be regarded as \textit{superior}.

\begin{figure}[http]
	\centering
	\includegraphics[width=0.75\textwidth]{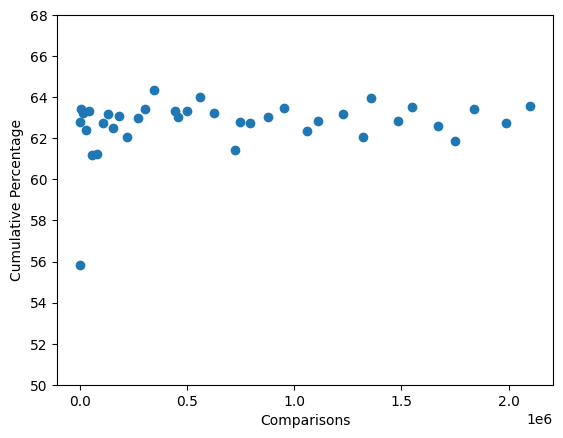}
	\caption{Cumulative percentage of instances where condition $S_{BS}(q,i) > S_{BS}(x,i)$ is met.}
	\label{fig:T4_III_MC}
\end{figure}

\subsection{$q_i=\alpha+\beta$}
To validate the condition of being \textit{superior}, it is possible to conduct another Monte Carlo simulation, which involves comparing $S_{BS}(q,i) > S_{BS}(x,i)$ for a large number of randomly generated values of $x \in \psi$ and $q \in \xi$. The simulation includes variables such as $x$, $q$, $\alpha$, $\beta$, and $c$, where $\alpha$ represents the hot value of $x$, $\beta$ represents the difference between the hot value of $q$ and $\alpha$, and $c$ represents the number of classes. All of these variables are randomly generated from a normal distribution.

In each comparison, a random value of $x$ is chosen from $\psi$ such that $x_i = \alpha$ and $x$ belongs to $\mathbb{R}^c$, where $\alpha$ denotes a hot value and $c$ represents the number of classes. Then, a random value of $q$ is selected from $\xi$ such that $q_i = \alpha + \beta$, where $\beta$ is a random positive value. The pair $(x,q)$ is evaluated to determine if $S_{BS}(q,i) > S_{BS}(x,i)$ is true.

Figure \ref{fig:T4_III_MC} depicts the outcomes of the Monte Carlo simulation for different numbers of comparisons, indicating the convergence of the Monte Carlo method. The figure illustrates the cumulative percentage of comparisons where the condition $S_{BS}(q,i) > S_{BS}(x,i)$ is satisfied. The results confirm that the condition $S_{BS}(q,i) > S_{BS}(x,i)$ holds in roughly 63\% of comparisons. 
When $q_i > \alpha$, it is common for the Brier Score to assign a higher score to the observation $x$; however, this is not always the case. Hence, the Brier Score cannot be considered \textit{superior}.

\section{Proof of Theorem \ref{thrm:MaxBS}}
\label{proof:MaxBS}
Let $x$ be an arbitrary member of the set $\psi$ and $y$ be the one-hot ground-truth label.
Let $x_i=\alpha$ where $i$ is the index of the true class.
To maximize $S_{BS}(x,i)$, the following equation is helpful:
\begin{align}
&
\sum_{k=1}^{c} x_k = 1
\Rightarrow
\sum_{k=1,k\neq i}^{c} x_k = 1 - \alpha
\\\Rightarrow &
\sum_{k=1,k\neq i}^{c} x_k^2 \leq (1-\alpha)^2
\end{align}
Since $(1-\alpha)^2$ is the primary component of the Brier Score, the maximization of $(1-\alpha)^2$ (or the minimization of $\alpha$) is essential to obtain the highest possible value of $S_{BS}(x,i)$. The minimum value of $\alpha$ is $\frac{1}{c}+\epsilon$, as any value of $\alpha$ below this threshold would make $x$ unsuitable for inclusion in the set $\psi$. As a result, $\alpha$ is equivalent to $\frac{1}{c}+\epsilon$.
On the other hand, if $x_i=\frac{1}{c}+\epsilon$, the remaining elements of the vector $x$ cannot exceed $\frac{1}{c}+\epsilon$. To simplify the analysis, assuming $\epsilon=0$, the elements of $x$ would be $x_k=\frac{1}{c},~\forall k \in [1,\cdots,c]$. Therefore:
\begin{align}
\max S_{BS}(x,i) &= 
\sum_{k=1,k\neq i}^{c}x_k^2 + (1-x_i)^2
\\&= 
\sum_{k=1,k\neq i}^{c}\frac{1}{c^2} + (1-\frac{1}{c})^2
\\&= 
(c-1) \frac{1}{c^2} + (1-\frac{1}{c})^2
\\&= 
\frac{1}{c} - \frac{1}{c^2}+1-\frac{2}{c}+\frac{1}{c^2}
\\&= 
1 - \frac{1}{c}
= 
\frac{c-1}{c}
\end{align}

\section{Proof of Theorem \ref{thrm:MaxLL}}
\label{proof:MaxLL}
Let $x$ be an arbitrary member of the set $\psi$, and $y$ be the one-hot ground-truth label vector such that $x_i=\alpha$, where $i$ is the index of the true class.
According to Eq. (\ref{eq:LL_single_label_form}) and since $-log$ is a decreasing and positive function in the range $[0,1]$, minimizing $\alpha$ is necessary to maximize $S_{LL}(x,i)$.
The minimum value of $\alpha$ is $\frac{1}{c}+\epsilon$, as any value of $\alpha$ below this threshold would render $x$ unsuitable for inclusion in the set $\psi$. To simplify the analysis, we assume $\epsilon=0$, which yields:
\begin{align}
\max S_{LL}(x,i) = -log(\frac{1}{c})
\end{align}

\section{Proof of Theorem \ref{thrm:PBSPLLScoringRules}}
\label{proof:PBSPLLScoringRules}
As $S_{BS}$ and $S_{LL}$ are strictly proper, so:
\begin{align}
S_{BS}(P,Q) >  S_{BS}(Q,Q)
\\
S_{LL}(P,Q) >  S_{LL}(Q,Q)
\end{align}
for $Q \neq P$.
Furthermore, it is clear that:
\begin{align}
S_{BS}(Q,Q) = S_{PBS}(Q,Q)
\\
S_{LL}(Q,Q) = S_{PLL}(Q,Q)
\end{align}
and also: 
\begin{align}
S_{PBS}(P,Q) \geq S_{BS}(P,Q)
\\
S_{PLL}(P,Q) \geq S_{LL}(P,Q)
\end{align}
Therefore:
\begin{align}
S_{PBS}(P,Q) \geq S_{BS}(P,Q) > S_{BS}(P,Q)
\\
S_{PLL}(P,Q) \geq S_{BS}(P,Q) > S_{LL}(P,Q)
\end{align}
As a result, $S_{PBS}$ and $S_{PLL}$ are \textit{strictly proper}.

\bibliography{cas-refs}

\end{document}